\documentclass[lettersize,journal]{IEEEtran}
\usepackage{amsmath,amsfonts}
\usepackage{algorithmic}
\usepackage{algorithm}
\usepackage{array}
\usepackage{booktabs}
\usepackage{subfigure}
\usepackage{textcomp}
\usepackage{stfloats}
\usepackage{url}
\usepackage{verbatim}
\usepackage{graphicx}
\usepackage{cite}
\usepackage{hyperref}

\usepackage{wrapfig}

\usepackage{amsmath}
\usepackage{amssymb}
\usepackage{mathtools}
\usepackage{amsthm}
\usepackage{color,soul}
\DeclareMathOperator*{\argmax}{argmax}

\theoremstyle{plain}
\newtheorem{theorem}{Theorem}

\theoremstyle{definition}

\theoremstyle{remark}

\begin{document}

\title{Robust and Efficient Communication for Multi-Agent Learning}

\author{Rafael Pina, Varuna De Silva and Corentin Artaud\\
Institute for Digital Technologies\\
Loughborough University London, United Kingdom\\
{r.m.pina, v.d.de-silva, c.artaud2}@lboro.ac.uk
}

\markboth{Journal of \LaTeX\ Class Files,~Vol.~14, No.~8, August~2021}%
{Shell \MakeLowercase{\textit{et al.}}: A Sample Article Using IEEEtran.cls for IEEE Journals}


\maketitle

\begin{abstract}
    Effective communication is a cornerstone of distributed intelligence in Multi-Agent Reinforcement Learning (MARL), yet ensuring that generated messages are both informative and robust to physical constraints remains a significant challenge. This paper introduces \textbf{M}ulti-\textbf{A}gent \textbf{R}egularized \textbf{C}ommunication (\textbf{MARC}), a novel framework inspired by information-theoretic principles of conditional mutual information. MARC employs an attention-based architecture coupled with a unique message regularization mechanism designed to minimize uncertainty regarding future system states, thereby inducing the learning of highly representative communication protocols. Crucially, we evaluate MARC under stringent communication bottlenecks and lossy channels, simulating the real-world constraints of autonomous robotic networks and decentralized systems. Our results demonstrate that MARC significantly outperforms state-of-the-art methods in complex cooperative domains. Furthermore, we provide a deep analysis of message characteristics, proving that MARC maintains high operational performance even under significant data compression, offering a scalable path for deploying intelligent agents in resource-constrained environments. 
\end{abstract}

\begin{IEEEkeywords}
Multi-Agent Systems, Reinforcement Learning, Emergent Communication, Information Compression, Deep Learning
\end{IEEEkeywords}

\section{Introduction}\label{sec:intro}
In distributed intelligent systems, agents often operate under conditions of partial observability, where critical environmental information is partitioned across the network and cannot be directly sensed by a single entity \cite{liu_multi-agent_2021,foerster_learning_2016,luis_deep_marl_water_wiley_2024}. In such complex cooperative tasks, effective communication acts as a fundamental bridge, enabling agents to transcend local limitations by sharing encoded perceptions with their teammates to achieve a collective goal \cite{kim_communication_2021}.

Recent advancements in Multi-Agent Reinforcement Learning (MARL) have positioned emergent communication as a pivotal research frontier \cite{das_tarmac_2019,sukhbaatar_learning_2016,liu_multi-agent_2021,foerster_learning_2016}. Most existing frameworks utilize the Centralized Training with Decentralized Execution (CTDE) paradigm \cite{oliehoek_optimal_2008,kraemer_multi-agent_2016}. While CTDE provides a powerful training foundation, relying on a centralized oracle is often unfeasible in practical field deployments due to latency, privacy, or infrastructure constraints \cite{canese_multi-agent_2021,cheng_knowledge_transfer_wiley_2022}. Communication-enabled decentralized execution offers a more robust alternative; by broadcasting learned message encodings rather than raw observation data, agents can maintain high-level coordination without the need for a global controller \cite{du_corr_communication_2021,wang2023ac2c}. When communication is available, under the CTDE paradigm both execution and training can be improved since agents can broadcast what they see or know at a certain moment to the rest of the teammates (as portrayed in Fig. \ref{fig:app_envs}). However, a significant challenge remains: ensuring these learned messages are sufficiently representative to drive meaningful cooperation while remaining stable throughout the non-stationary learning process.

\begin{figure}[!t]
    \centering
    \subfigure[Lumberjacks]{\label{fig:app_envs_a}\includegraphics[width=0.40\columnwidth]{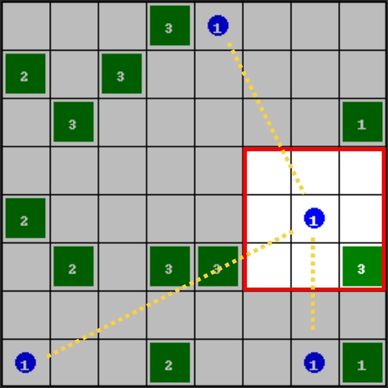}}
    \hspace{5mm}
    \subfigure[TrafficJunction]{\label{fig:app_envs_b}\includegraphics[width=0.40\columnwidth]{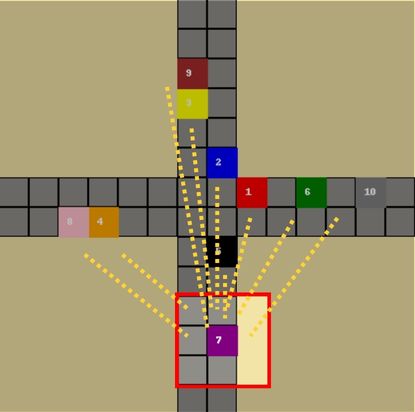}}
    \caption{Representation of two partially obserable environments where the agents can only see their surroundings and thus will strongly benefit if they receive information from the others about their local perceptions of parts of the environment that are far away (from \cite{magym}).}
    \label{fig:app_envs}
\end{figure}

Furthermore, the physical reality of communication channels---specifically limited bandwidth and signal noise---presents a major hurdle for autonomous systems \cite{resnick_capacity_2020,wang_bottleneck_2019,cheng_knowledge_transfer_wiley_2022}. Just as wireless communication protocols must compress data to fit within restricted spectral bands \cite{Mohamed2022}, intelligent agents must learn to generate information-dense messages that survive transmission through constrained or lossy channels.

In this paper, we propose \textbf{Multi-Agent Regularized Communication (MARC)}, a framework designed to produce highly representative communication protocols within the CTDE paradigm. MARC is modular and compatible with diverse value-function factorization methods. Our approach leverages an \textbf{attention-based architecture} enhanced by a novel \textbf{information-inspired regularizer}. This mechanism stabilizes the learning process by coaxing agents to learn messages that minimize the uncertainty regarding their future underlying observations, effectively maximizing the utility of the shared information.

Addressing the necessity of bandwidth efficiency, we also explore a novel perspective on message compression via the Discrete Cosine Transform (DCT). While frequently utilized in classical signal processing, the DCT has been largely overlooked in recent MARL literature. We demonstrate that information-based lossy compression can significantly reduce message size without compromising system-level performance, offering a scalable solution for resource-constrained hardware. Other related communication methods often fail to deal with compression and learn efficient communication strategies \cite{wang_bottleneck_2019}. 

In summary, the primary contributions of this work are as follows:
\begin{itemize}
    \item \textbf{A Novel Regularized Architecture:} We introduce MARC, featuring a unique message regularizer that enhances agent performance across diverse environments and ensures the generation of highly representative communication protocols.
    \item \textbf{Robustness under Compression:} We analyze the effects of lossy message compression using the DCT, demonstrating its efficacy in reducing communication overhead in complex MARL tasks.
    \item \textbf{Behavioral \& Message Analysis:} We provide an in-depth empirical study of learned message structures under both ideal and lossy conditions, showcasing the adaptive behaviors of the agents.
    \item \textbf{Theoretical Foundations:} We provide the information-theoretic grounding and motivations that underpin the proposed regularization mechanism.
\end{itemize}

The rest of this paper is organised as follows: in section \ref{app:rel_work} we introduce other relevant works that relate to this paper, and in section \ref{sec:bkgd} we present the key principles to provide the needed context to the concepts introduced in this work. In section \ref{sec:meth} we present in detail the proposed methods in this paper, followed by the experiments in section \ref{sec:exps}. Finally, in section \ref{sec:conc} we present the conclusions of this paper, together with some potential avenues for further research.

\section{Related Work}
\label{app:rel_work}
MARL has been well studied in the recent past \cite{liu_multi-agent_2021,wang2021qplex,sunehag2017valuedecomposition,hu_momix_2023,liu_pet_disent_2024}. Particularly in cooperative MARL, value function factorisation methods represent a branch of popular algorithms whose core objective is to learn a decomposition of a joint Q-function into agent-wise Q-functions. Numerous approaches that follow this configuration have shown outstanding results in multiple complex scenarios \cite{rashid_qmix_2018,shen2022resq,liu_pet_disent_2024,zhou_factorized_2019}. Importantly, these methods operate under the well-known CTDE paradigm. While this configuration represents the middle of the learning spectrum, the other two ends have also been studied, but it has been discussed that neither full centralisation nor full decentralisation can achieve as good results \cite{sunehag2017valuedecomposition}.  

As an alternative to the conventional methods, communication in MARL has gained the attention of the scientific community \cite{liu_multi-agent_2021,jiang_learning_2018,foerster_learning_2016,das_tarmac_2019,zhang_efficient_comm_2019,zhang_succint_comm_2020}. At its core, the key difference in these methods when compared to conventional approaches is that agents can share something about their experience of the environment, both during training and execution. In \cite{foerster_learning_2016} the authors have introduced one of the very first communication-based MARL methods. In simple terms, it is shown how two agents can solve tasks where they must communicate some of what one knows but the other doesn’t. In \cite{sukhbaatar_learning_2016}, the authors show the effect of communication in more complex scenarios with more agents. In particular, it is shown that, when agents communicate, their level of efficient cooperation increases in scenarios with more complex objectives, such as traffic junction negotiations. As these approaches became more popular, more complex methods started to arise. These consider factors such as what, whom or when to communicate. For example, in \cite{jiang_learning_2018} it is proposed an attention-based mechanism that allows the agent to decide to whom they should send their messages. In the case of \cite{das_tarmac_2019}, the authors propose an improved messaging mechanism that consists of signing the messages, allowing to target specific agents with customised information. Also in \cite{chu_multi-agent_2020}, the authors propose a communication method that uses message fingerprints and a different way of aggregating all the messages of the agents. In \cite{masia_2022} it is also shown a different way of aggregating messages in a manner that represents them with information that is more relevant for the policies. That is achieved through an aggregation of all the observations with the aim of predicting a global state representation. What is shared can also be important, as demonstrated in works such as \cite{liu_multi-agent_2021} or \cite{kim_communication_2021}, where agents that share their intentions achieve better performances. 

Still involving MARL but on a slightly different scope, language discovery in communication games has also been aim of studies. Usually in games that involve a speaker and a listener agent, the kind of language that emerges during the communication process has been studied in works like \cite{mordatch_emergence_2018}, where the agents learn a task-specific vocabulary during training that they use to solve the tested games. Also in \cite{gupta_networked_2020} or \cite{ivana_emerg_comm_2020} the authors evaluate the interpretability of the languages learned by the agents in terms of how humans can interpret them. Another step towards interpretability is to ground language, as it is studied in \cite{lin_ae_2021} where the authors use an autoencoder module that grounds language by reconstructing observations. While this deeper analysis becomes more challenging as more agents are used, it is still important to analyse what the agents devise to communicate in order to understand their learning process and how they understand the imposed tasks. 

Compressing messages in MARL has also been aim of studies. For instance, \cite{kim2019learning} proposes a communication architecture that can schedule messages in a way such that it can optimally operate under bandwidth contrainsts. This type of bandwidth limitations are relevant in MARL studies \cite{resnick_capacity_2020,wang_bottleneck_2019}, which might benefit from message compression in communication. In this work we tackle the problem from a different perspective and study how the DCT can help in compression for MARL.

\section{Background}
\label{sec:bkgd}
\subsection{Decentralised Partially Observable Markov Decision Processes (Dec-POMDPs)}
In this work, we model the learning problems following Decentralised Partially Observable Markov Decision Processes (Dec-POMDPs) \cite{oliohek_dec_pomdp_2016}. The Dec-POMDP can be defined as a tuple $G=\langle S,A,O,Z,P,r,\gamma,N\rangle$, where $s \in S$ represents the current state of the environment. From the current state, local observations $o_i$ can be derived according to a function $O(s,i):S\times \mathcal{N}\rightarrow Z$ for a certain agent $i \in \mathcal{N}\equiv \{1,\dots,N\}$. Additionally, the agents also maintain an action-observation history $\tau_i\in \mathcal{T}\equiv(Z\times A)^*\rightarrow\{\tau_1,\ldots,\tau_N\}$. When in a given state, each agent performs an action $a_i \in A$ where $A$ is the action space, forming a joint action $a=\{a_1,\ldots,a_N\}$ that is executed in the current state $s$ of the environment, and from which it results a reward that is shared by the entire team, $r(s,a):S\times A\rightarrow\mathbb{R}$. After the actions are executed, the environment transits to a next state $s'$ according to a probability function that models the dynamics of the environment, $P(s'|a,s):S\times A\times S\rightarrow [0,1]$. Usually, the actions of the agents are controlled by a policy $\pi_i(a_i|\tau_i):\mathcal{T}\times A \rightarrow [0,1]$, but in the considered context of communication in MARL, the policy is instead conditioned not only on $\tau_i$, but also on a set of incoming messages from the other agents $m_{-i}$, meaning that the corresponding policy that controls the actions of the agents can be written as $\pi_i(a_i|\tau_i,m_{-i})$ (where $-i$ refers to all except $i$). During learning, the joint objective of the agents is to maximise an action-value function $Q_{\pi}(s_t,a_t)=\mathbb{E}_{\pi}[R_t|s_t,a_t]$, where $R^t=\sum_{k=0}^\infty\gamma_kr_{t+k}$ is the discounted return with a discount factor $\gamma \in [0,1)$. 

\subsection{Centralised Training with Decentralised Execution (CTDE) and Communication in MARL}
Within MARL, centralised training with decentralised execution (CTDE) is a popular paradigm that allows the agents to be trained in a centralised manner, but they must execute their policies in a decentralised way \cite{li_structured_rl_2022,sunehag2017valuedecomposition,hu_momix_2023}. Communication-based methods can be easily integrated into this framework. The key difference from simple CTDE is that, with communication, the agents can broadcast information to others not only during training but also during execution. This means that, despite the learned policies being decentralised, the agents can receive encoded messages that represent what their teammates see or sense at a certain timestep. While in the fully centralised setting there is a common oracle that sees everything in the environment, a communication setting can be seen as a more realistic choice since communication is done on a peer-to-peer basis where each agent is responsible for broadcasting its own messages. Thus, they are not dependent on a central unit that has the big - and often unrealistic - advantage of observing everything at the same time.  

Intuitively, communication in MARL should improve the performances of the agents when it is combined with other base methods, as shown in works such as \cite{liu_multi-agent_2021}. If the messages learned are informative enough, they should be useful for the agents to learn to “talk” with each other and improve their cooperative strategies. However, if the messages learned are not adequate, this might result in an overload for the learning networks and harm the performances of the agents. 

\subsection{Value Function Factorisation in MARL}\label{sec:value_func_marl}
In cooperative MARL, value function factorisation methods form a group of powerful algorithms to solve complex MARL tasks. The key idea of these methods is to learn a way of decomposing a joint action-value function into agent-wise functions \cite{sunehag2017valuedecomposition},
\begin{equation}\label{eq:value_function_fact}
    Q_{tot}\left(\tau ,a\right)\mathrm{=}f\left(Q_i\mathrm{(}\tau_i,a_i\mathrm{;}{\theta }_i\mathrm{)}\right),\forall i \in\{1,\dots,N\},
\end{equation}
where $f$ here represents a certain function that mixes the individual functions into a joint function $Q_{tot}$. An efficient decomposition of the joint action-value function should be done in a way that satisfies the Individual-Global-Max (IGM) condition \cite{qtran_2019}. This condition states that the set of local optimal actions should also maximise the joint Q-function. This can be formalised as 
\begin{equation}\label{eq:igm}
    \argmax_aQ_{tot}\left(\tau,a\right)=
    \begin{pmatrix}
    \argmax_{a_1}Q_1({\tau }_1,a_1) \\
    \vdots\\
    \argmax_{a_N}Q_N({\tau }_N,a_N)
    \end{pmatrix}.
\end{equation}
In \cite{sunehag2017valuedecomposition}, the authors introduce Value Decomposition Networks (VDN) as a way of factorising the joint $Q_{tot}$ as the sum of the individual Q-functions. Later on, QMIX \cite{rashid_qmix_2018} proposes a new non-linear way of factorising the $Q_{tot}$ that extends the range of functions that can be represented by VDN to a family of monotonic functions. Both these factorisation methods are sufficient to satisfy (\ref{eq:igm}).

In these methods, the loss used to update the networks is based on the temporal difference loss as described in DQN \cite{Mnih2015HumanlevelCT} that uses a replay buffer and a target network, but here with respect to a $Q_{tot}$. This loss can be formalised as 
\begin{equation}\label{eq:vff_loss}
    \mathcal{L}(\theta)=\mathbb{E}_{b\sim B}\left[\big(r+\gamma\mathop{\mathrm{max}}_{a'}Q_{tot}(\tau',a';\theta^-)-Q_{tot}(\tau,a;\theta)\big)^2\right],
\end{equation}
for some sample $b$ that is sampled from the replay buffer $B$, and where $\theta$ and $\theta^-$ are the parameters of the learning network and of a target network, respectively. In this paper, we use VDN and QMIX to demonstrate our communication approach on top of standard MARL approaches that don't initially use communication.

\subsection{Discrete Cosine Transform (DCT)}
In the fields of data compression and signal processing, the Discrete Cosine Transform (DCT) \cite{dct_1974} is a certain function based on a sum of cosine functions that processes a sequence of data points and encodes them into a compressed representation. By doing so, it is possible to achieve a much smaller representation of the data in terms of size occupied. For simplicity, we introduce here only the expression for the type II of the DCT (that is the most common form and the one used in this paper) that, generally, can be done following \cite{makhoul_dct_1980}
\begin{equation}\label{eq:c5_dct}
    X(k)=2\sum_{n=0}^{C-1}x(n)\text{cos}\left(\frac{\pi(2n+1)k}{2C}\right), 0\leq k\leq C-1,
\end{equation}
where $X(k)$ represents the $k^{th}$ transform of the DCT, and $x(n)$ denotes the sequence of values to be compressed, with size $C$. In the context of this paper, the DCT is applied to the last dimension of the vectors containing the messages of the agents, forming a compressed representation of their messages. Using this smaller representation is important when we consider channel sizes or other constraints, through which only compressed representations can be sent. When this representation arrives at the other end of the communication channel, the inverse of the function is applied, and the message can be recovered with some information loss. We hypothesise that, if this information loss is not too heavy, the agents can still learn and benefit from communication in MARL. This decompressing process can be done by inverting Eq. (\ref{eq:c5_dct}). The inverse can be calculated following \cite{makhoul_dct_1980}
\begin{align}\label{eq:c5_idct}
    x(n)=\frac{1}{C}\Biggl[\frac{X(0)}{2}\Biggl. &\left.+\sum_{k=1}^{C-1}X(k)cos\left(\frac{\pi(2n+1)k}{2C}\right)\right], \notag \\ & 0\leq n \leq C-1.
\end{align}

The DCT can be used both as a lossy and as a lossless compression method, depending on the parts of the messages that are encoded. In this paper, we consider the case of lossy compression, since for lossless compression to be achieved the messages are encoded but without reducing their sizes, which becomes redundant when we consider limited communication channels.

\section{Methods}
\label{sec:meth}
\subsection{Multi-Agent Regularized Communication (MARC) in MARL}
In this section, we propose MARC, a new architecture for efficient communication in MARL. The idea is to build an inter-agent communication architecture that can learn meaningful messages to ensure cooperation in complex MARL tasks. In the proposed architecture, MARC starts by using an attention module to learn messages that are generated from the local observations of the agents. This mechanism starts by encoding the observations that are then given as an initial message $m'$ to an attention module, in order to learn their relative importance. These values are embedded into $k\in\mathbb{R}^{d_k},v\in\mathbb{R}^{d_k},q\in\mathbb{R}^{d_k}$, where $d_k$ is the embedding dimension (as shown in Fig.\ref{fig:net_arch}). As such, we define the keys, queries, and values, at each timestep $t$, for the attention operations as
\begin{equation}
    k_t = \left[W_{K,1}\tau_1^t,\dots,W_{K,i}\tau_i^t,\dots,W_{K,N}\tau_N^t\right],
\end{equation}
\begin{equation}
    v_t = \left[W_{V,1}\tau_1^t,\dots,W_{V,i}\tau_i^t,\dots,W_{V,N}\tau_N^t\right],
\end{equation}
\begin{equation}
    q_t = \left[W_{Q,1}{m_1'^t},\dots,W_{Q,i}{m_i'^t},\dots,W_{Q,N}{m_N'^t}\right],
\end{equation}
where $W_{Q,i}, W_{K,i}, W_{V,i}$ are trainable weight matrices, and where $m'^t$ corresponds to the initial message embeddings that will be refined (as in Fig. \ref{fig:net_arch}). As a result, the attention weights calculated for an agent $i$ from the messages and observations in our approach can be formalised as
\begin{equation}
    \alpha_{ij}=\frac{\text{exp}(\phi\cdot(q_i^t\cdot{k_j^t}^T))}{\sum_{x\in\mathcal{N}}\text{exp}(\phi\cdot(q_i^t\cdot{k_x^t}^T))},
\end{equation}
where $\phi$ is a scaling factor. These weights are further used to calculate a new aggregated attention-encoded message $m_i^t$,
\begin{equation}
    m_i^t=\sum_{j=1}^N\alpha_{ij}v_j^t,
\end{equation}
where the weights $\alpha_{ij}$ define the relationship between agents $i$ and $j$, and hence the importance of value $v_j^t$. The intuition behind the choice of keys, queries and values, is that, by relating the messages to multiple different latent representations of the observations, the messages learned will be able to capture more relevant information from the observations.
\begin{figure*}[!t]
    \centering
    \includegraphics[width=\textwidth]{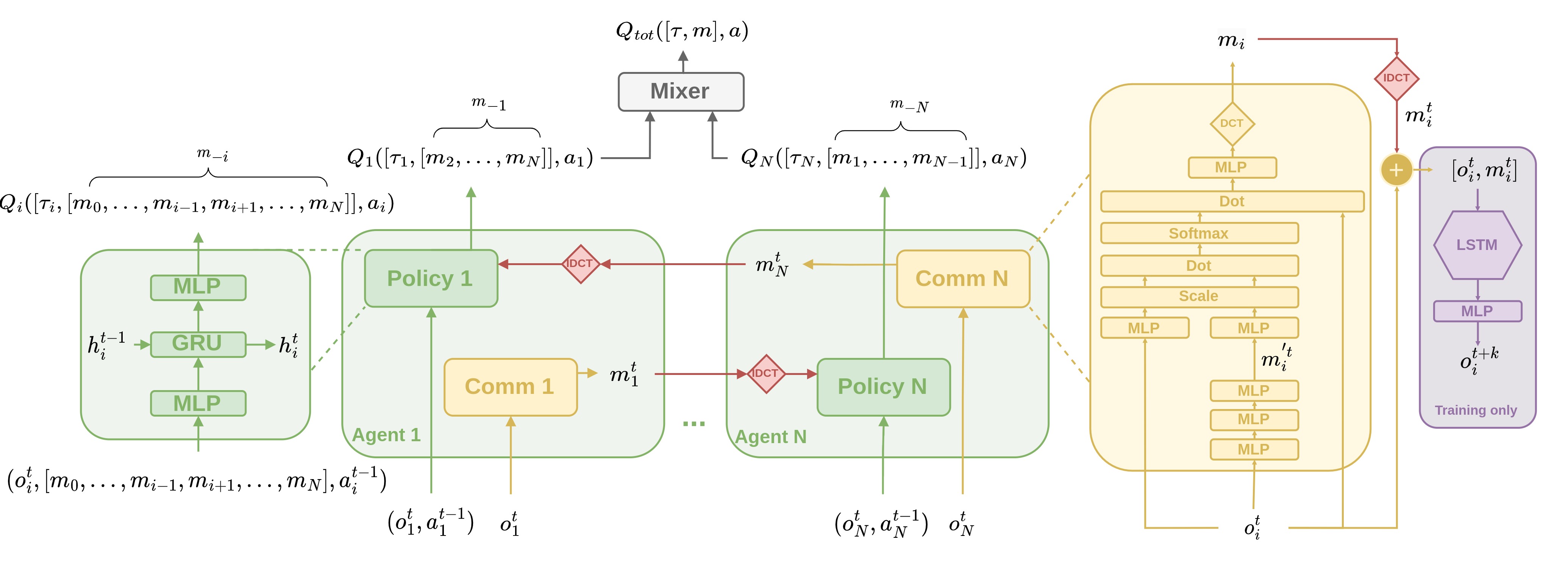}
    \caption{Architecture of the proposed communication method for MARL. The proposed method can be used together with any value function factorisation method (whose mixer is represented in the figure in the block \textit{mixer}) and uses parameter sharing. The pink diamonds represent the inverse DCT that is applied to decompress the messages that are compressed with the DCT (yellow diamond) after they are computed by the communication network (yellow). Note that both the DCT and IDCT blocks are only used when we analyse the effect of compression in section \ref{sec:exps_comp}. In the main experiments, these blocks are not applied.}
    \label{fig:net_arch}
\end{figure*}

Despite the recent success of attention-based architectures for multi-agent communication in complex scenarios, we begin with the hypothesis that the messages originated by our base architecture are not rich enough to learn complex environments. In this sense, we propose a message regularizer in our architecture. The key intuition is that, if the messages generated by the agents help to predict their own next observations, it means that these messages are likely to contain more meaningful information about their individual observations. We hypothesise that the uncertainty associated with the future values of the observations is reduced when we use both the previous values of the messages and the observations, when compared to using only the previous values of the observations. Motivated by the concepts of conditional entropy, our hypothesis is formalised in the following Theorem \ref{theo:2}, when we consider a certain set of messages that contains information about the observations.

\begin{theorem}
    \label{theo:2}
    Let $m$ be a certain encoding of the observations $o$ and $m^-$ and $o^-$ represent the previous values of each, respectively. We have that $H(o|o^-)\geq H(o|o^-,m^-)$.
\end{theorem}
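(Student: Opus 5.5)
This is the standard fact that conditioning cannot increase entropy. I would prove it by writing the gap between the two sides as a conditional mutual information and showing that quantity is nonnegative.

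First, I treat $o$, $o^-$ and $m^-$ as random variables on a common probability space. The fact that $m$ is an encoding of $o$, so that $m^-$ is a (possibly stochastic) function of $o^-$ and the history, is not needed for the inequality. It only guarantees that the variables are jointly defined. By the chain rule for entropy,
\begin{equation*}
H(o\mid o^-) - H(o\mid o^-,m^-) = I(o;m^-\mid o^-).
\end{equation*}
So it suffices to show $I(o;m^-\mid o^-)\geq 0$.

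For the second step, I expand the conditional mutual information as an average of Kullback--Leibler divergences:
\begin{equation*}
I(o;m^-\mid o^-)=\sum_{o^-}p(o^-)\, D\big(p(o,m^-\mid o^-)\,\big\|\,p(o\mid o^-)\,p(m^-\mid o^-)\big).
\end{equation*}
Each divergence is nonnegative by Gibbs' inequality. That inequality follows from Jensen's inequality applied to the concave function $\log$: $-D(p\|q)=\mathbb{E}_p[\log(q/p)]\leq \log \mathbb{E}_p[q/p]\leq \log 1=0$. The weights $p(o^-)$ are nonnegative, so the average is nonnegative, and the inequality follows.

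The main obstacle is not the algebra but the setting. The observations in the environments may be continuous or mixed, and then $H$ must be read as differential entropy. In that case I would repeat the same argument with densities. The identity $h(o\mid o^-)-h(o\mid o^-,m^-)=I(o;m^-\mid o^-)$ still holds provided the relevant entropies are finite, and conditional mutual information remains nonnegative as an expected KL divergence. I would state this finiteness assumption explicitly.

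I would also note when equality holds: exactly when $o$ and $m^-$ are conditionally independent given $o^-$. In that case the messages add no predictive information beyond the agent's own past observation. This gives the motivation for a regularizer that pushes $I(o;m^-\mid o^-)$ to be strictly positive by training the messages to help predict future observations.
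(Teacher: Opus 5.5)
Your proposal is correct and follows the paper's route: the paper also writes $H(o\mid o^-)-H(o\mid o^-,m^-)$ as $I(o;m^-\mid o^-)$, via the expansion $H(o,o^-)+H(m^-,o^-)-H(o,o^-,m^-)-H(o^-)$, and concludes from the nonnegativity of mutual information. Your proof of that nonnegativity through Gibbs' inequality and your finiteness caveat for differential entropy are details the paper simply takes for granted; as you note, the hypothesis that $m$ encodes $o$ plays no role in the inequality itself.
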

\begin{proof}
    Since $m$ is a certain encoding of $o$, it is true that it contains information about $o$. The mutual information between the observations and the previous messages when conditioned on the previous observations can be written as
    \begin{equation}
        I(o,m^-|o^-)=H(o,o^-)+H(m^-,o^-)-H(o,o^-,m^-)-H(o^-)
    \end{equation}
    \begin{equation}
        =H(o|o^-)-H(o|o^-,m^-)
    \end{equation}
    Since the mutual information is always non-negative, from the above we can write:
    \begin{align}
        &H(o|o^-)-H(o|o^-,m^-)\geq 0\\
        &H(o|o^-)\geq H(o|o^-,m^-),
    \end{align}
    meaning that conditioning $o$ both on $o^-$ and $m^-$ reduces the uncertainty linked to the observations when compared to conditioning only on $o^-$.
\end{proof}

Motivated by Theorem \ref{theo:2}, we will define a message regularizer that aims to predict the next observations of the agents, given the previous messages received together with the previous observations. As such, we use a recurrent encoder that receives the previous messages alongside the previous observations, $[o_i^{-p}, m_i^{-p}]$ with values up to $T-p$, and predicts the next $p$ observations ahead $o_i^{+p}$, where $p$ denotes the number of timesteps to predict ahead of each timestep $t$, and $T$ the total length of the episode. For our experiments, we predict only one timestep ahead of each $t$. Let $g(\cdot;\theta_r)$ here denote a certain neural network with parameters $\theta_r$ composed of an LSTM module that estimates the next values of the observations of the agents given the previous messages and the previous observations (as described in Fig. \ref{fig:net_arch}). The observations predicted by this network for the timesteps ahead can be given by $o_i^{+p'}=g([o_i^{-p}, m_i^{-p}];\theta_r)$. The predicted outputs of this network can then be used to calculate a second loss that will auxiliate our learning problem, as described in
\begin{equation}
    \mathcal{L}_m = \frac{1}{T-p}\sum_{t=1}^{T-p}\lVert o^{+p} - o_i^{+p'}\rVert_2^2,
\end{equation}
for an agent $i$, where $T-p$ denotes the number of predicted values in the vector, given that an episode lasts $T$ timesteps and $g(\cdot;\theta_r)$ is predicting $k$ timesteps ahead of each timestep $t$. For the proposed method, this additional loss is used alongside the loss described in Eq. (\ref{eq:vff_loss}), resulting in the overall objective of minimising, with respect to $\tau$, the following loss
\begin{align}
    \mathcal{L}(\theta,\theta_c,\theta_r)=&\sum_{b=1}^B\Biggl[\big(y_{tot}-Q_{tot}(\tau,a;\theta)\big)^2\Biggl. \notag
    \\
    &\left.+\frac{1}{T-p}\sum_{t=1}^{T-p}\lVert \tau^{+p}-g(\tau^{+p},m^{+p};\theta_r)\rVert_2^2\right],
\end{align}
for a batch of samples $B$, and where $y_{tot}=r+\gamma\mathop{\mathrm{max}}_{a'}Q_{tot}(\tau',a';\theta^-)$, for the parameters of a network and the respective target, $\theta$ and $\theta^-$, the parameters of the communication network $\theta_c$, and the parameters of the regularizer module $\theta_r$. Ahead we demonstrate how this message regularizer positively affects the learning process of the agents. Fig. \ref{fig:net_arch} depicts the architecture of the proposed method as described in this section (note that the DCT and IDCT blocks are represented for completeness and are not used when compression is not being analysed). Importantly, we build our communication method on top of value function factorisation methods and adopt the famous parameter sharing convention \cite{gupta_2017}. As such, our method can be easily integrated with any existing value function factorisation method, to which we add our regularizer objective without affecting the convergence of the algorithm, as shown in Theorem \ref{theo:1} below. For simplicity of notation, we do not consider the history of the agents in this demonstration. While this is not the case in the experiments, for the theorem we assume tabular conditions.
\begin{theorem}\label{theo:1}
    Assuming that $0\leq \alpha_k < 1$, $\sum_k\alpha_k=\infty$, $\sum_k\alpha_k^2<\infty$, updating the learning problem following the rule
    \begin{align}\label{eq:q_update}
        Q_{tot}^{k+1}&(o_t,a_t)=Q_{tot}^k(o_t,a_t)+\alpha_k\Biggl[r_t+\gamma max_a Q_{tot}^k(o_{t+1},a)\Biggr. \nonumber
        \\
        &\left.-Q_{tot}^k(o_t,a_t)+\frac{1}{T-p}\sum_{t=1}^{T-p}\lVert o^{+p}-g_k(o^{+p},m^{+p};\theta_r)\rVert_2^2\right].
    \end{align}
    will make $Q_{tot}$ converge to a certain value, such that $||Q_{tot}^k(o_t,a_t)-Q_{tot}^*(o_t,a_t)||\leq \lambda (T-p)G$, as $k\rightarrow \infty$ and $\lVert o^{+p}-g_k(o^{+p},m^{+p};\theta_r)\rVert_2^2\leq G$.
\end{theorem}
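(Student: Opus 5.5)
The plan is to view the update rule (\ref{eq:q_update}) as ordinary tabular Q-learning on $Q_{tot}$ with a bounded, vanishing-free perturbation of the reward. Write
\[
e_k = \frac{1}{T-p}\sum_{t=1}^{T-p}\lVert o^{+p}-g_k(o^{+p},m^{+p};\theta_r)\rVert_2^2 .
\]
Then (\ref{eq:q_update}) reads $Q^{k+1}_{tot}=(1-\alpha_k)Q^k_{tot}+\alpha_k[\tilde r_t+\gamma\max_a Q^k_{tot}(o_{t+1},a)]$ with perturbed reward $\tilde r_t=r_t+e_k$.

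The standing assumption $\lVert o^{+p}-g_k\rVert_2^2\le G$ gives $0\le e_k\le \frac{1}{T-p}(T-p)G=G$ if the prefactor is kept. If one bounds the sum crudely, as the statement suggests, one gets $0\le e_k\le (T-p)G$. I will carry the cruder bound $(T-p)G$, since it is the one appearing in the claim.

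\textbf{Step 1: the perturbed operator is a contraction.} Define the perturbed Bellman operator
\[
(\tilde{\mathcal{B}}_kQ)(o,a)=\mathbb{E}[r+e_k+\gamma\max_{a'}Q(o',a')].
\]
This is a $\gamma$-contraction in the sup norm for each fixed $k$, because the additive term $e_k$ cancels in $\tilde{\mathcal{B}}_kQ-\tilde{\mathcal{B}}_kQ'$.

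\textbf{Step 2: stochastic approximation.} Invoke the standard stochastic-approximation lemma of Jaakkola--Jordan--Singh / Tsitsiklis. Under the Robbins--Monro conditions $\sum_k\alpha_k=\infty$, $\sum_k\alpha_k^2<\infty$, the noise term $F_k=\tilde r_t+\gamma\max Q^k-\tilde{\mathcal{B}}_kQ^k$ has zero conditional mean and variance bounded by $C(1+\lVert Q^k\rVert^2)$; the boundedness of $e_k$ supplies this.

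\textbf{Step 3: compare with the true operator.} Let $\mathcal{B}$ be the unperturbed operator, whose fixed point is $Q^*_{tot}$ (assuming IGM via VDN/QMIX so that tabular $Q_{tot}$-learning is well-posed). Then $\lVert\tilde{\mathcal{B}}_kQ-\mathcal{B}Q\rVert_\infty\le (T-p)G$ for every $Q$. Apply the version of the lemma for "contraction up to a bias": if
\[
\lVert \mathbb{E}[F_k\mid\mathcal{F}_k]\rVert\le\gamma\lVert Q^k-Q^*\rVert+c_k,
\]
then $\limsup_k\lVert Q^k-Q^*\rVert\le \limsup c_k/(1-\gamma)$. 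Alternatively, compare deterministically: writing $\Delta_k=Q^k-Q^*$,
\[
\lVert \Delta_{k+1}\rVert\le(1-\alpha_k)\lVert\Delta_k\rVert+\alpha_k(\gamma\lVert\Delta_k\rVert+(T-p)G)+\alpha_k\cdot\text{noise}.
\]
This gives $\limsup\lVert\Delta_k\rVert\le \frac{(T-p)G}{1-\gamma}$, so $\lambda=1/(1-\gamma)$ is the constant in the statement.

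\textbf{Main obstacle.} The hard part is that $e_k$ depends on $\theta_r$, which changes with $k$, so the target operator is nonstationary. The fixed-point lemma cannot be applied to a single operator. I would handle this with the bias form of the lemma in Step 3, which needs only the uniform bound $|e_k|\le(T-p)G$ and no convergence of $g_k$. A secondary issue is making the noise-variance condition rigorous with a $k$-dependent shift. I would address it by noting that $e_k$ is measurable with respect to the history $\mathcal{F}_k$ and bounded, so it contributes only to the bias and not to the martingale-difference part.
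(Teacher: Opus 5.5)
Your argument is correct, and it takes a different route from the paper. The paper keeps the Jaakkola-style error process $\delta^k=Q^k_{tot}-Q^*_{tot}$ and splits it additively as $\delta^k=\delta_1^k+\delta_2^k$:
\begin{itemize}
\item $\delta_1$ is driven by the ordinary TD term $F_k$ and is asserted to vanish w.p.~1 by the standard tabular Q-learning theorem;
\item $\delta_2$ is driven only by the regularizer and is bounded through $\delta_2^{k+1}\le(1-\alpha_k)\lVert\delta_2^k\rVert+\alpha_k\lambda(T-p)G$, with $\lambda=(p-T)^{-1}$ coming from rewriting the $1/(T-p)$ prefactor.
\end{itemize}
That split is shorter but does not really go through. $F_k$ contains $\gamma\max_aQ^k_{tot}(o_{t+1},a)$, hence $\delta_2^k$. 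So $\delta_1$ is not an autonomous Q-learning error, and the bias feeds back through the $\gamma\max$ term. Your contraction-up-to-bias formulation, $\lVert\mathbb{E}[F_k\mid\mathcal{F}_k]\rVert\le\gamma\lVert\Delta_k\rVert+c$, handles exactly this coupling. It also needs only a uniform bound on the regularizer, so the drift of $\theta_r$ is harmless, as you say.

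The consequence is that your $\lambda=1/(1-\gamma)$ differs from the paper's, but yours is the reading under which the statement is true. The paper's $\lambda=(p-T)^{-1}$ is negative, so its final bound $\lambda(T-p)G=-G$ cannot hold for $G>0$. Even the intended magnitude $G$ fails. Take one observation, one action, $r\equiv 0$, and every squared prediction error equal to $G$. Then $Q^*_{tot}=0$, while the update becomes $Q^{k+1}_{tot}=Q^k_{tot}+\alpha_k\bigl[G-(1-\gamma)Q^k_{tot}\bigr]$, so $Q^k_{tot}\to G/(1-\gamma)$. The same example shows that keeping the $1/(T-p)$ prefactor gives the sharp, attained bound $G/(1-\gamma)$.

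Three things to tighten:
\begin{itemize}
\item In your ``deterministic comparison'' the noise cannot be absorbed as $\alpha_k\cdot\text{noise}$ inside a norm inequality. Split off $W_{k+1}=(1-\alpha_k)W_k+\alpha_k w_k$, which tends to $0$ a.s. by $\sum_k\alpha_k^2<\infty$ and bounded conditional variance (given bounded iterates). Then run your recursion on the remainder; this is how the bias lemma is proved.
\item If $e_k$ is computed from the step-$k$ sample, it is not $\mathcal{F}_k$-measurable. Decompose it into its conditional mean (a bias with the same bound) plus a bounded martingale difference.
\item The step-size conditions must hold per observation--action pair for the asynchronous tabular update.
\end{itemize}
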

\begin{proof}
    Let $y_{tot}=r_t+\gamma max_a Q_{tot}^k(o_{t+1},a)$ for conciseness. Let $U_k(\cdot)$ denote the term $\sum_{t=1}^{T-p}\lVert o^{+p}-g_k(o^{+p},m^{+p};\theta_r)\rVert_2^2$. By rearranging the previous equation, we can write
    \begin{align}
        &Q_{tot}^{k+1}(o_t,a_t)\nonumber \\ 
        &=Q_{tot}^k(o_t,a_t)+\alpha_k\left[y_{tot}-Q_{tot}^k(o_t,a_t)+\frac{1}{T-p}U_k(\cdot)\right]\nonumber \\
        &=Q_{tot}^k(o_t,a_t)+\alpha_k\left[y_{tot}-Q_{tot}^k(o_t,a_t)-(p-T)^{-1}U_k(\cdot)\right] \nonumber \\
        &=Q_{tot}^k(o_t,a_t)+\alpha_k\left[y_{tot}-Q_{tot}^k(o_t,a_t)-\lambda U_k(\cdot)\right]
    \end{align}
    for $\lambda=(p-T)^{-1}$. Similarly to \cite{zhang_efficient_comm_2019} and based on \cite{jaakola_dynamic_processes_1993}, we can now write the equation as a function of stochastic dynamic processes,
    \begin{equation}
        \delta^{k+1}(o_t,a_t)=(1-\alpha_k)\delta^k(o_t,a_t)+\alpha_k\left[F_k(o_t,a_t)-\lambda U_k(\cdot)\right],
    \end{equation}
    where $\delta^k(o_t,a_t)=Q_{tot}^k(o_t,a_t)-Q^*_{tot}(o_t,a_t)$, and $F_k(o_t,a_t)=r_t+\gamma max_a Q_{tot}^k(o_{t+1},a)-Q_{tot}^k(o_t,a_t)$. This can be decomposed into random processes according to $\delta^{k+1}(o_t,a_t)=\delta_1^{k+1}(o_t,a_t)+\delta_2^{k+1}(o_t,a_t)$, resulting in
    \begin{align}
        &\delta_1^{k+1}(o_t,a_t)=(1-\alpha_k)\delta_1^{k}(o_t,a_t)+\alpha_k F_k(o_t,a_t)\\
        &\delta_2^{k+1}(o_t,a_t)=(1-\alpha_k)\delta_2^{k}(o_t,a_t)-\alpha_k U_k(o_t,a_t).
    \end{align}
    From \cite{melo_2001_convergence} it is known that the first process converges to zero w.p. 1. Similarly to \cite{zhang_efficient_comm_2019}, the second process can be written as
    \begin{equation}
        \delta_2^{k+1}(o_t,a_t)\leq(1-\alpha_k)||\delta_2^{k}(o_t,a_t)||+\alpha_k\lambda(T-p)G,
    \end{equation}
    which means that (\ref{eq:q_update}) converges to a number greater than zero. Thus, $Q_{tot}$ will converge to an optimal value as the iterations of the update tend to infinity, i.e., 
    \begin{align}
        ||Q_{tot}^k(o_t,a_t)-Q_{tot}^*(o_t,a_t)||&=||\delta^k(o_t,a_t)|| \nonumber \\
        &=||\delta_1^k(o_t,a_t) + \delta_2^k(o_t,a_t)||\nonumber \\
        &\leq||\delta_1^k(o_t,a_t)|| + ||\delta_2^k(o_t,a_t)||\nonumber \\
        &\leq \lambda (T-p)G.
    \end{align}
    Hence, this means that the proposed method will still lead to convergence of the learning process, as the number of iterations $k$ tends to infinity. 
\end{proof}

\subsection{Message Compression for Lossy Communication in MARL}
In this work, we also intend to study whether communication can still be done efficiently when the messages exchanged are, for example, lost, or dropped during the process. Hence, instead of simply zeroing or naively cutting values of the messages, we use the DCT, a popular lossy compression method. We note that this method for message compression does not require changing the sizes of any of the agent networks, since the compression only applies to a potential communication channel, and the messages are reconstructed to the original size when they reach the other communication end. 

In our architecture, each agent will generate its messages according to the method described in the previous subsection, and then, in the results presented in subsection \ref{sec:exps_comp}, the generated messages will be compressed using the DCT, as described in Eq. (\ref{eq:c5_dct}). In this work, we focus on lossy compression, i.e., each agent compresses the message in a way that will make it smaller in size, reducing the potential communication overhead. When the messages arrive at the destination, these are decompressed using the inverse of the DCT (IDCT, as described in Eq. (\ref{eq:c5_idct}). This is depicted in the architecture in Fig. \ref{fig:net_arch}). When we consider message compression, the change in the architecture comes from the compression of the message after it is generated (DCT in the yellow diamond of the figure), and then it is decompressed in the destination (IDCT, illustrated by the pink diamonds in the figure). Finally, in section \ref{sec:exps_comp} we use the result of this process as input of the agents, together with their own observations, in the same way that was described before, but now the messages are compressed in the source and then decompressed in the endpoint. This results in the input $[\tau_i, m_{-i}^*]$ for agent $i$, where $m_{-i}^*$ here represents the decompressed messages coming from the other agents.

\section{Experiments and Results\protect\footnote{Codes available at \href{https://github.com/rafaelmp2/marc-marl}{https://github.com/rafaelmp2/marc-marl}}}\label{sec:exps}
In this section, we present the experiments carried out to evaluate the performance of the proposed MARC. While the proposed communication method can be built on top of any value function factorisation method in MARL, in this paper we apply it together with the architectures of QMIX \cite{rashid_qmix_2018} and VDN \cite{sunehag2017valuedecomposition}, two popular value function factorisation methods in MARL, as introduced in subsection \ref{sec:value_func_marl}. We demonstrate the performances of MARC in comparison with the vanilla methods without communication, and three popular communication methods, MASIA \cite{masia_2022}, COMMNET \cite{sukhbaatar_learning_2016}, and TARMAC \cite{das_tarmac_2019}. Note that, in the results in \ref{sec:main_exps}, there is no compression, i.e., the DCT and IDCT blocks in Fig. \ref{fig:net_arch} are not used. These are only used when investigating message compression in \ref{sec:exps_comp}. We start by presenting the experimental details and hyperparameters below.

\subsection{Hyperparameters and Implementation Details}\label{app:hyperp}
Our experiments were executed in a desktop computer with an 18-core Intel Core i9-10980XE CPU, 256GB of RAM, and 3 NVIDIA A6000 GPUs. In the experiments carried out in this paper, the agents are controlled by a deep recurrent Q-network that uses a GRU (gated recurrent unit) with width 64. Additionally, in QMIX the hidden layers used by the mixer have size 32. All the experimented methods share the learning networks, following the parameter sharing convention widely adopted in the literature. This allows to speed up learning but requires that an agent ID is added to the inputs of the agents so that the network can differentiate them. Regarding the communication networks in MARC, we set the size of all the embeddings of the attention mechanism to 64. The length of the messages that each agent produces at each timestep is defined as 10. When using compression, a value of, for example, 40\% of compression will naturally result in messages of size 6, and so on. Regarding TARMAC and MASIA, we use them with QMIX as the mixer.

We use a replay buffer to store experiences with size 5000, from which minibatches with size of 32 episodes are sampled for training. The replay buffer is updated over time. To take actions, the exploration-exploitation trade-off of the agents follows the epsilon-greedy method, with the value epsilon, $\epsilon$, starting at 1. This value anneals gradually throughout 50000 training episodes down to a minimum of 0.05. The value of the discount factor $\gamma$ is set to 0.99. The parameters of the target networks are updated every 200 episodes. All the networks are trained using the RMSProp optimization algorithm, with a learning rate $\alpha=5\times10^{-4}$.

\begin{figure*}[!t]
    \centering
    \includegraphics[width=\textwidth]{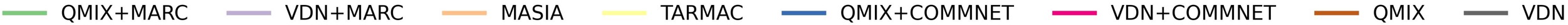}
    \\
    \vspace{0.00mm} 
    \subfigure[3s\_vs\_5z]{\label{fig:env_a}\includegraphics[width=0.23\textwidth]{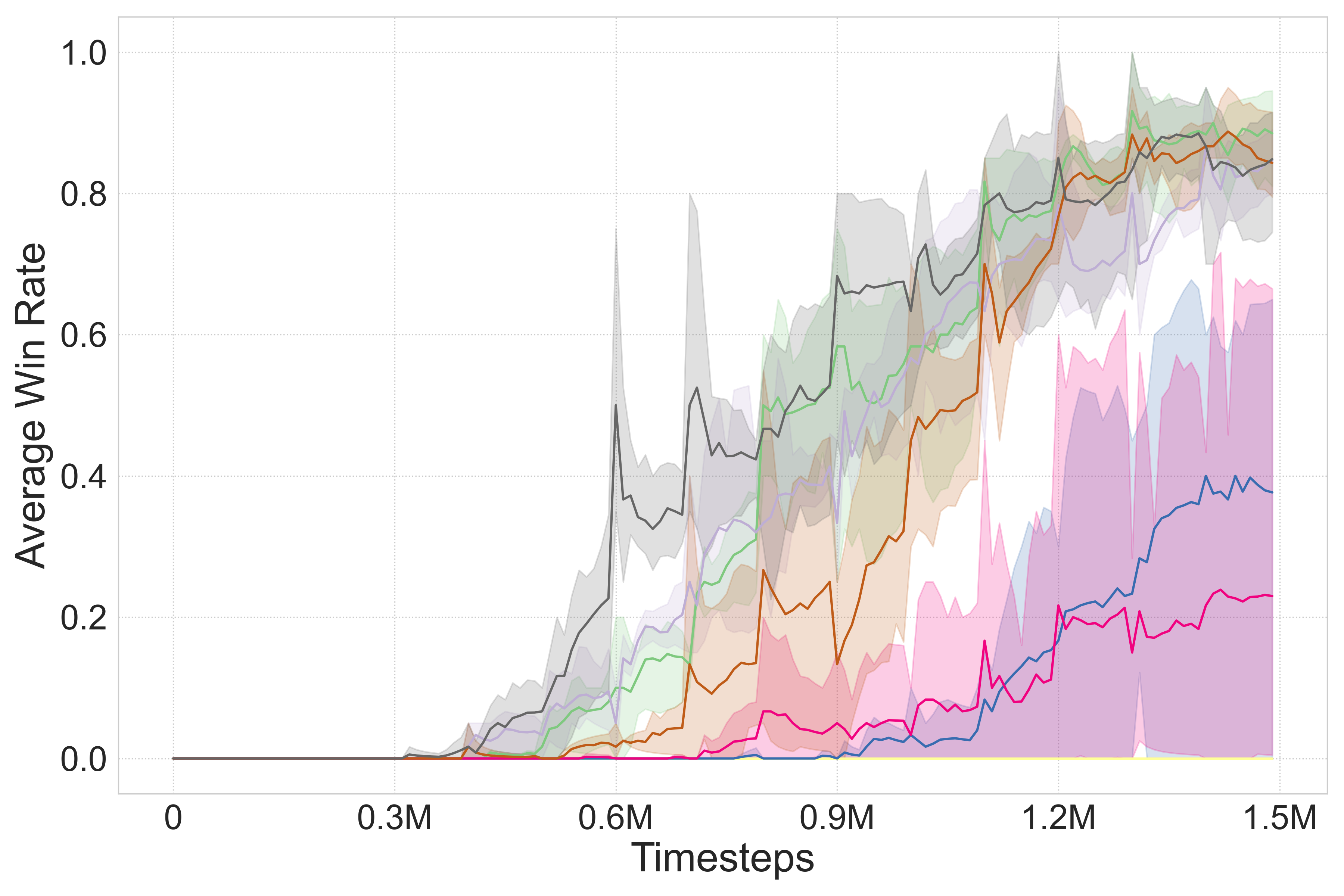}}
    \hfill
    \subfigure[2c\_vs\_64zg]{\label{fig:env_b}\includegraphics[width=0.23\textwidth]{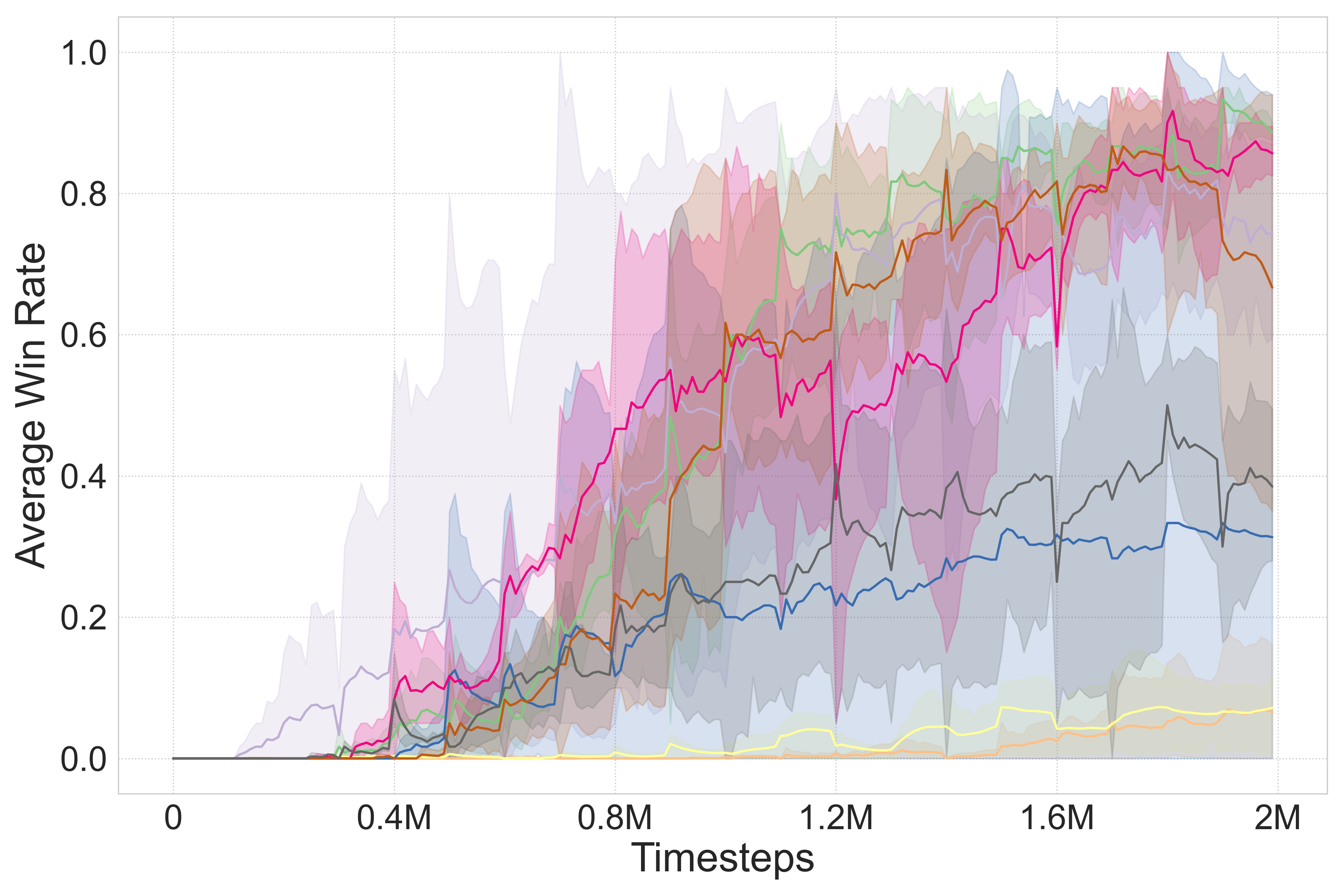}}
    \hfill
    \subfigure[MMM2]{\label{fig:env_c}\includegraphics[width=0.23\textwidth]{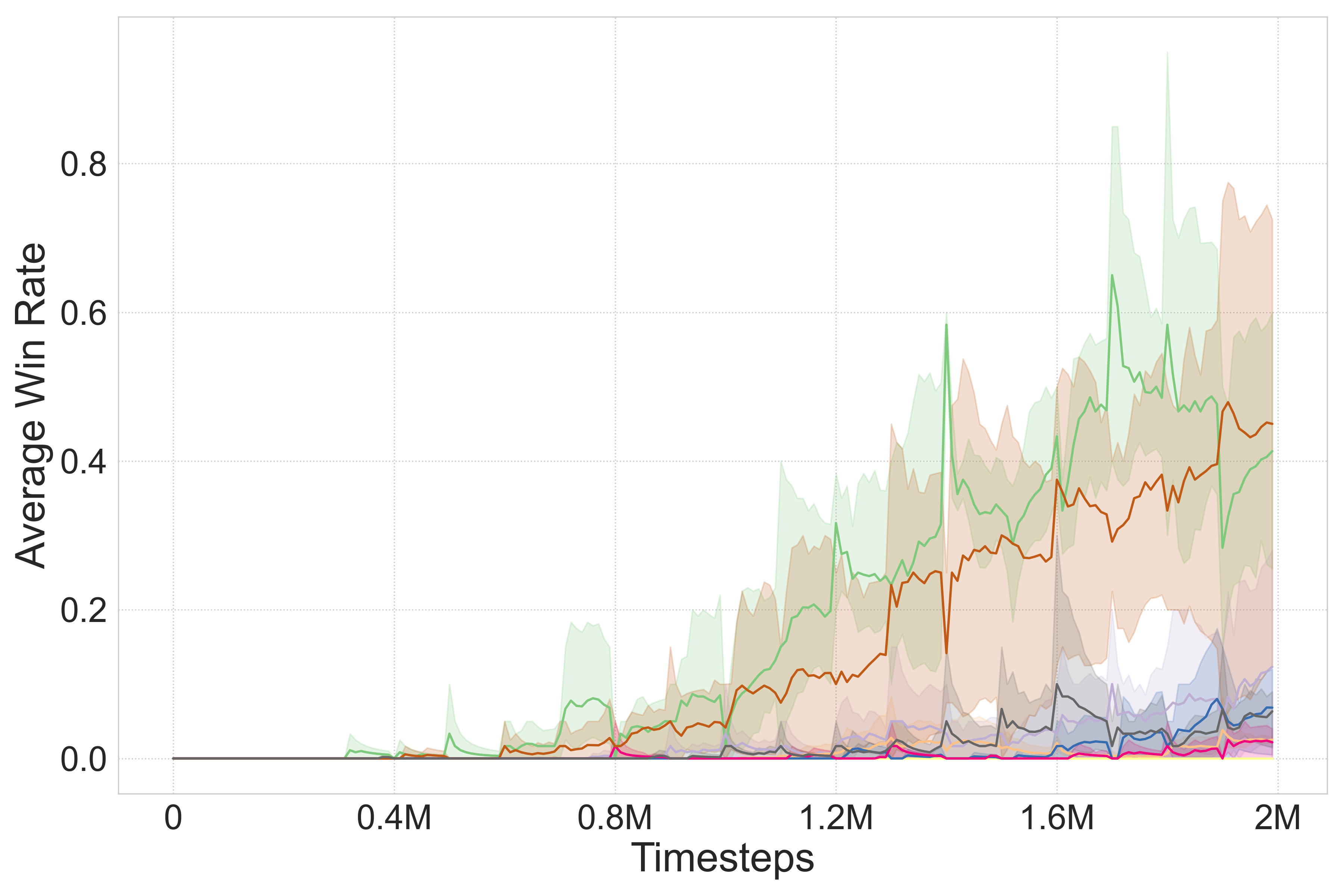}}
    \hfill
    \subfigure[1o2r\_vs\_4r]{\label{fig:env_d}\includegraphics[width=0.23\textwidth]{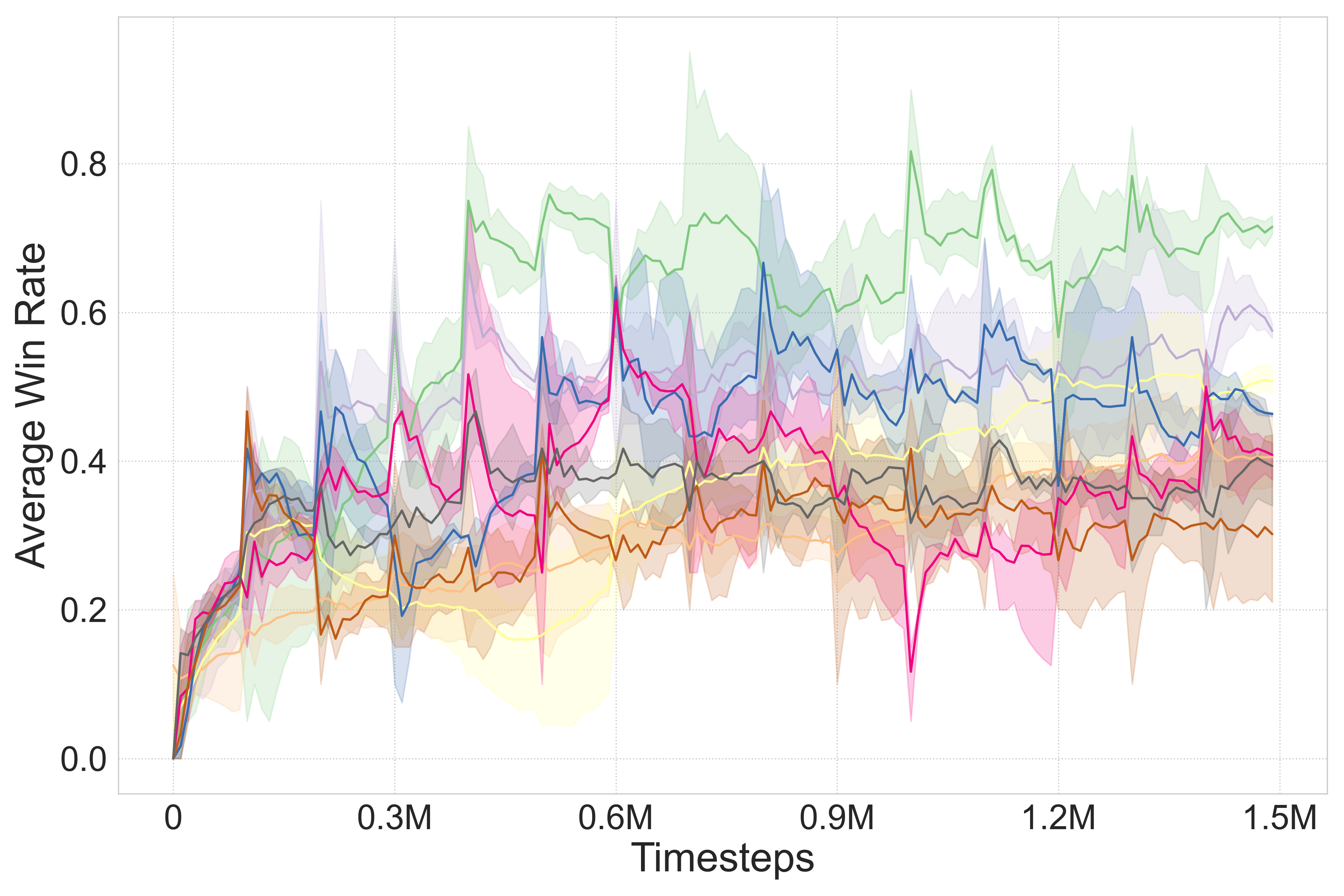}}
    \caption{Performance of the attempted methods in the Starcraft environments. The plots depict the average win rates of the agents over training time. (a) In 3s\_vs\_5z 3 stalker units must defeat a team with 5 zealot units. (b) In 2c\_vs\_64zg, the agents are 2 colossi playing against 64 zerglings. (c) MMM2 where 1 medivac (healing unit), 2 marauders and 7 marines play against 1 medivac, 3 marauders and 8 marines. (d) In 1o2r\_vs\_4r, 1 overseer and 2 roaches must defeat 4 reapers, where all units spawn at random points and only the overseer knows the enemy location (as in \cite{masia_2022}).}
    \label{fig:main_res}
\end{figure*}
\begin{figure*}[!t]
    \centering
    \includegraphics[width=\textwidth]{resources/legend.jpg}
    \\
    \vspace{0.00mm} 
    \subfigure[$p=-0.50$]{\label{fig:env_pp_a}\includegraphics[width=0.32\textwidth]{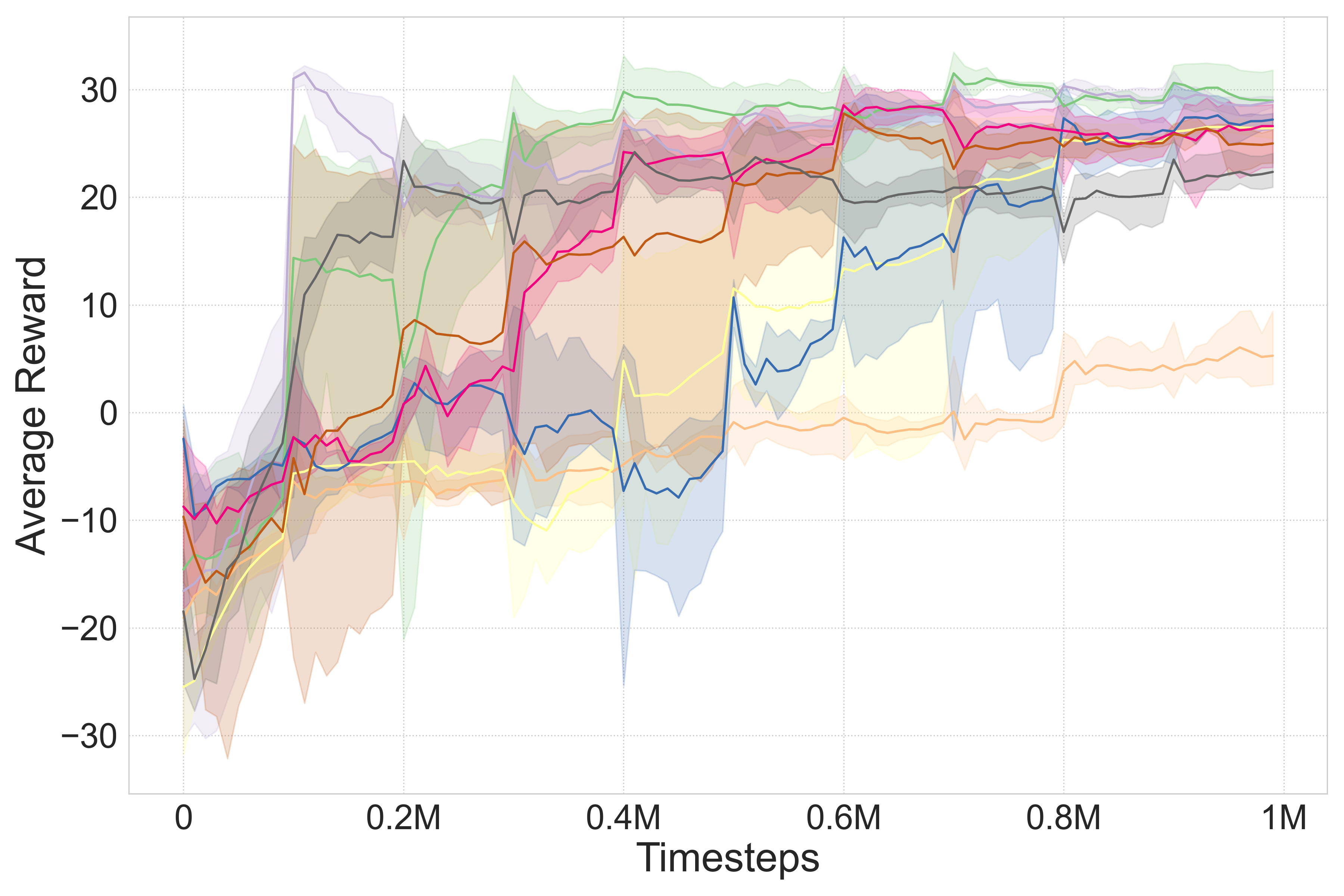}}
    \hfill
    \subfigure[$p=-0.75$]{\label{fig:env_pp_b}\includegraphics[width=0.32\textwidth]{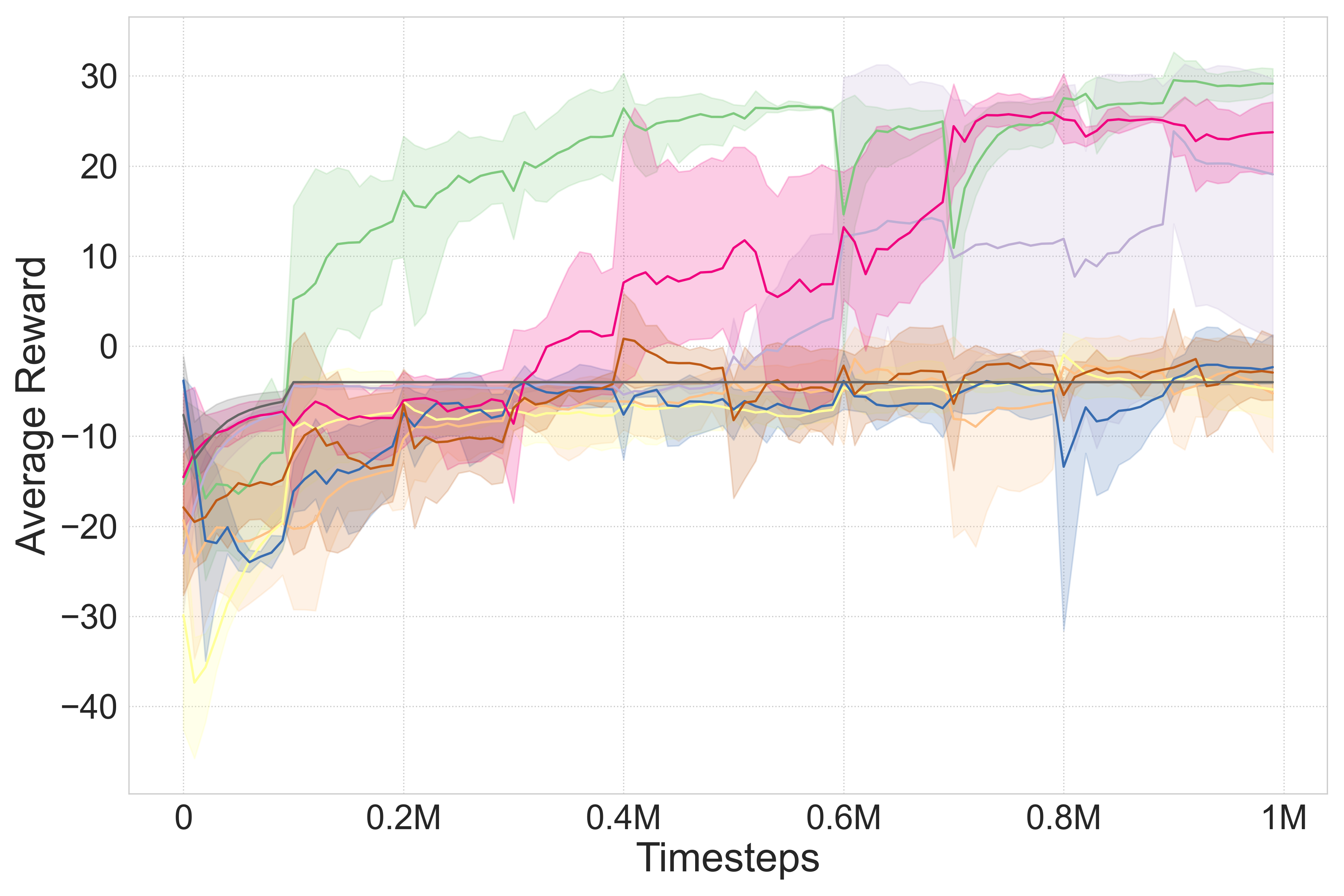}}
    \hfill
    \subfigure[$p=-1.0$]{\label{fig:env_pp_c}\includegraphics[width=0.32\textwidth]{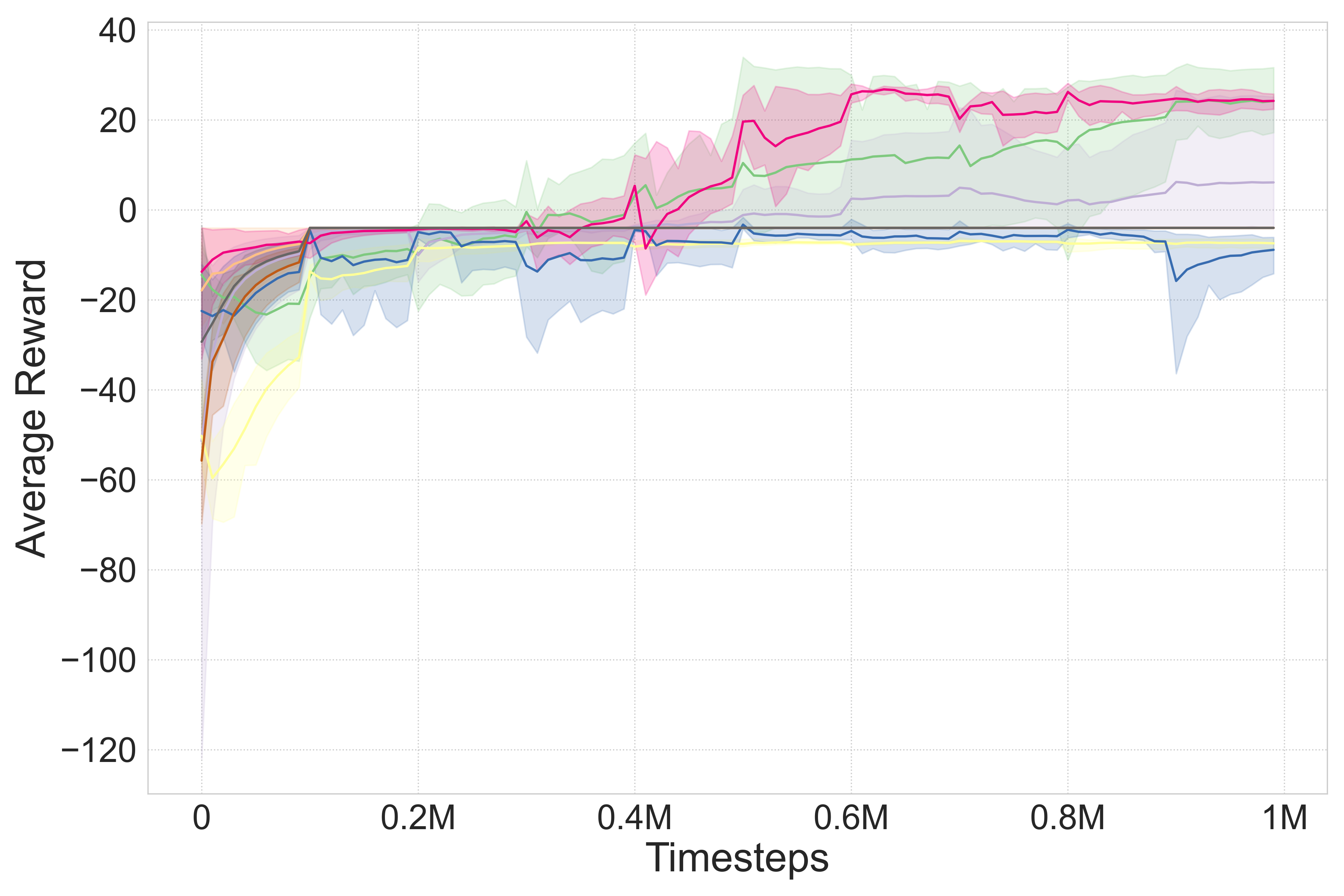}}
    \caption{Performance of the attempted methods in the PredatorPrey environment for different levels of cooperation penalty. The plots depict the average rewards of the agents over training time.}
    \label{fig:pp_res}
\end{figure*}
\subsection{Main Results: Performance in SMAC and PredatorPrey}\label{sec:main_exps}
Fig. \ref{fig:env_a} illustrates the performances in the 3s\_vs\_5z SMAC scenario \cite{smac_2019}. Considering that this is the simplest scenario out of all the experimented ones, it was expected that all the methods would be able to solve it somewhat easily. Particularly in the case of QMIX, we can see that the agents benefit from communication, making them solve the task sooner. However, in the case of VDN, it doesn’t show to be as useful. This suggests that, for this scenario, communication might not be as important as in other cases to learn efficient strategies, although convergence is still achieved quickly. 

Yet, in the other more complex scenarios communication proves to have a stronger impact. In Fig. \ref{fig:env_b}, we can see that, for 2c\_vs\_64zg, MARC enables the agents to learn the task much faster and at a higher level of performance. Both QMIX and VDN show much-improved performances when combined with MARC, with particular emphasis on VDN+MARC, where the improvement is outstanding. In 1o2r\_vs\_4r (as used in works like \cite{masia_2022,wang_ndq_2019}) (Fig. \ref{fig:env_d}) we observe a similar scene, where QMIX+MARC stays above the others, although all of them achieve average winning rates. In the case of MMM2, in Fig. \ref{fig:env_c} we can also see benefits of communication, although these are not as prominent as in the case of the previous scenarios. The effect of communication becomes evident mostly when we look at QMIX+MARC, where MARC has a strong positive impact. VDN+MARC also shows improvements, although it stays below QMIX+MARC. Interestingly, when we look at other communication methods we can see that, in general, TARMAC \cite{das_tarmac_2019} or MASIA \cite{masia_2022} do not perform well in some tasks. We also note the inconsistent performance of COMMNET, which performs reasonably in some cases, but fails in others (Fig. \ref{fig:env_c}). 

Taking a deeper look at the behaviours of the agents, the positive impact of MARC is further illustrated in Fig. \ref{fig:smac_behav} for 2c\_vs\_64z. We can see in this figure that agents that use MARC to communicate (bottom) tend to remain cooperating close to each other for the entire episode, while when they do not communicate (top), they eventually start moving away from each other. The behaviours when using QMIX+COMMNET (middle) support our previous observations, where agents tend to be together more often when they communicate, as with MARC. However, with QMIX+COMMNET, the agents assume a less optimal strategy where they spent long periods of time in the middle of the enemy team, losing health points and resulting in the elimination of one of the agents, leaving the other alone to finish the game. This suboptimal behaviour does not happen with MARC and this is reflected in the team performances in Fig. \ref{fig:main_res}.
\begin{figure*}[!t]
    \centering
    \includegraphics[width=\textwidth]{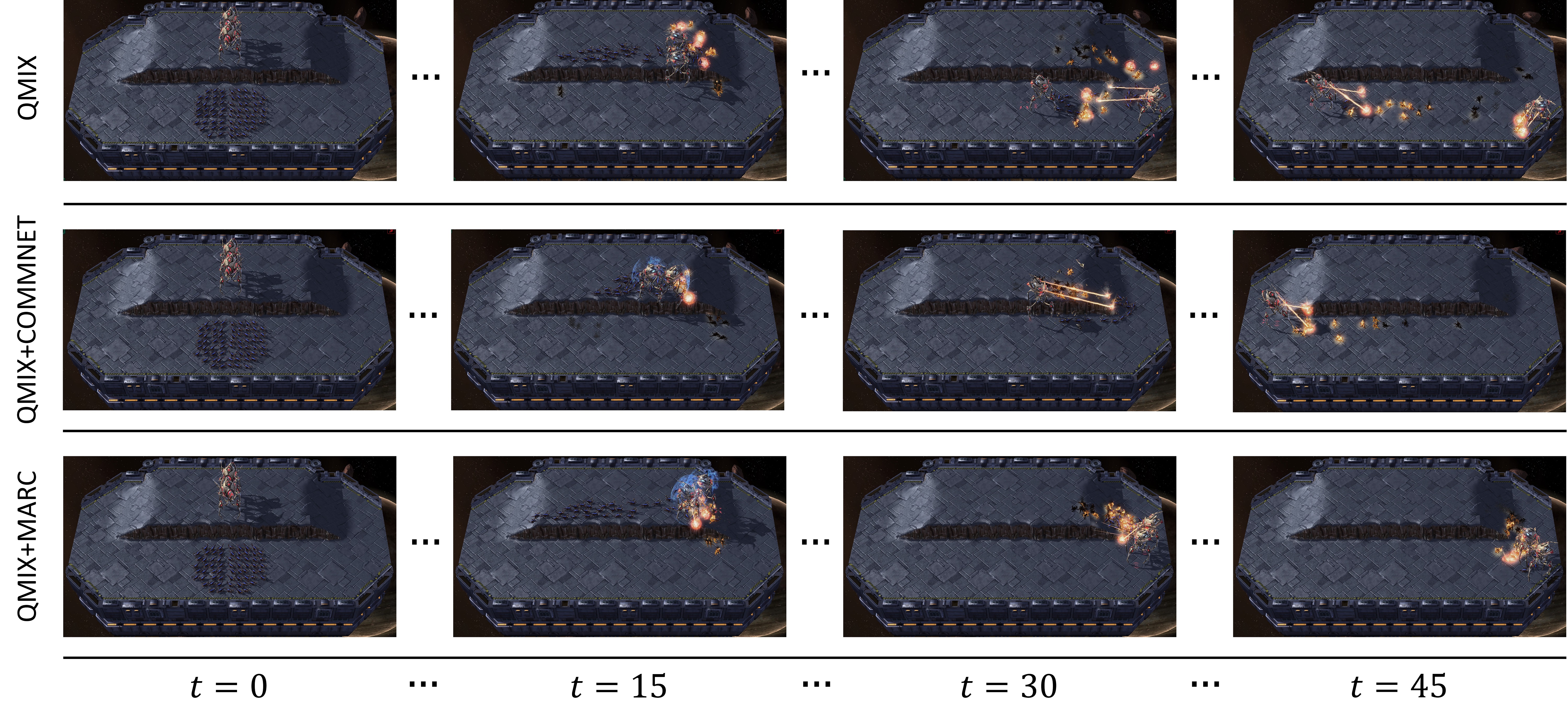}
    \caption{Learned behaviours for QMIX (top), QMIX+COMMNET (middle), and QMIX+MARC (bottom) after training in 2c\_vs\_64zg. When we use MARC, we can see that the agents will be always cooperating close to each other, while without communication they eventually move away from one another (top right). Additionally, we can see that with QMIX+COMMNET the agents communicate but learn a sub-optimal policy that results into one of them being eliminated (middle right).}
    \label{fig:smac_behav}
\end{figure*}

To further demonstrate the strength of the proposed method, we evaluate MARC in a PredatorPrey environment \cite{magym}. Previous works such as \cite{qtran_2019,liu_multi-agent_2021,deep_coord_2019} have demonstrated the importance of considering these scenarios that impose stronger punishments for non-cooperative behaviours. We consider a version of this environment where 4 agents must catch 2 moving prey in a $7\times7$ grid. At least two agents are needed to catch one prey, and when they do, the team receives a reward of $5\times N$, where $N$ is the number of agents. For each step there is a small penalty of $-0.01\times N$ and, most importantly, there is a team penalty of $p\times N$ that punishes the agents when one of them attempts to catch a prey alone. In Fig. \ref{fig:pp_res} we can see the rewards achieved by the experimented methods for different values of $p$. When we use MARC, the agents take advantage of the messages sent by the others and can solve the task. While for $p=-0.50$ communication does not seem to be critical, as we increase to $p=-0.75$ (Fig. \ref{fig:env_pp_b}) and $p=-1.0$ (Fig. \ref{fig:env_pp_c}), we can see that communication is necessary, and the agents cannot solve the task without it. Interestingly, while TARMAC and MASIA only managed to achieve positive rewards for $p=-0.50$ (Fig. \ref{fig:env_pp_a}), when we scale to $p=-0.75$ and $p=-1.0$, MARC and COMMNET are the only ones to be successful. Overall, we observe the inconsistency of methods like COMMNET and TARMAC, which do well in some cases, but completely fail in others. On the other hand, MARC shows consistently good performances in all the experimented scenarios. 

\subsection{Performance in Other Complex Scenarios}\label{app:more_perf_res}
To support the results presented in the previous subsection \ref{sec:main_exps}, we demonstrate now the strength of the proposed method in other relevant complex environments. We analyse our method in Lumberjacks and TrafficJunction (environments as illustrated in Fig. \ref{fig:app_envs_a} and \ref{fig:app_envs_b}). As discussed in the previous subsection, in some SMAC environments communication might not be as important as in other scenarios. Thus, in addition to the PredatorPrey results in Fig. \ref{fig:pp_res}, we carried additional experiments in other complex environments. However, we still stress the importance of these methods being able to solve a wide range of environments, and not only scenarios where communication is needed. In order to get a better understading of these additional environments, we present below a brief description for each one of them.

\textbf{Lumberjacks} (Fig. \ref{fig:app_envs_a}) consists of an environment where 4 agents must chop all the existing 12 trees in a $8\times 8$ map. In this environment, the agents receive a penalty of $-1$ every timestep and a reward of $+10$ when a tree is cut \cite{magym}. Each tree is assigned a random level between 1 and the number of agents, where this level represents the number of agents required at the same time to cut the tree. Importantly, the high step penalty and the elevated need for cooperation make this task challenging.

\textbf{TrafficJunction} (Fig. \ref{fig:app_envs_b}) represents a cross-shaped traffic junction where agents coming from 4 different entries in the junction must cross towards a pre-defined location at the end of another road after crossing the junction \cite{magym}. In our case, we have defined the number of agents to 10 agents, making this task very challenging. The agents received a penalty of $-0.01$ to incentivize them to keep moving and a penalty of $-10$ if they collide with another agent.


In the results depicted in Fig. \ref{fig:app_res_a} and Fig. \ref{fig:app_res_b} we observe that, in these complex environments, communication shows to have a strong impact and the proposed method demonstrates substantially improved performances over the baselines. This kind of environments has been used in other works to evaluate the strength of communication, such as in \cite{sukhbaatar_learning_2016}. Importantly, we note once again the robustness of MARC across different environments, while others might work well in some of them, but perform poorly in others (as discussed in the previous subsection \ref{sec:main_exps}).
\begin{figure}[!t]
    \centering
    \includegraphics[width=0.7\columnwidth]{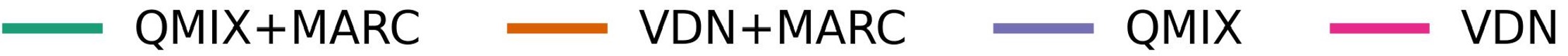}
    \\
    \vspace{0.00mm} 
    \subfigure[Lumberjacks]{\label{fig:app_res_a}\includegraphics[width=0.45\columnwidth]{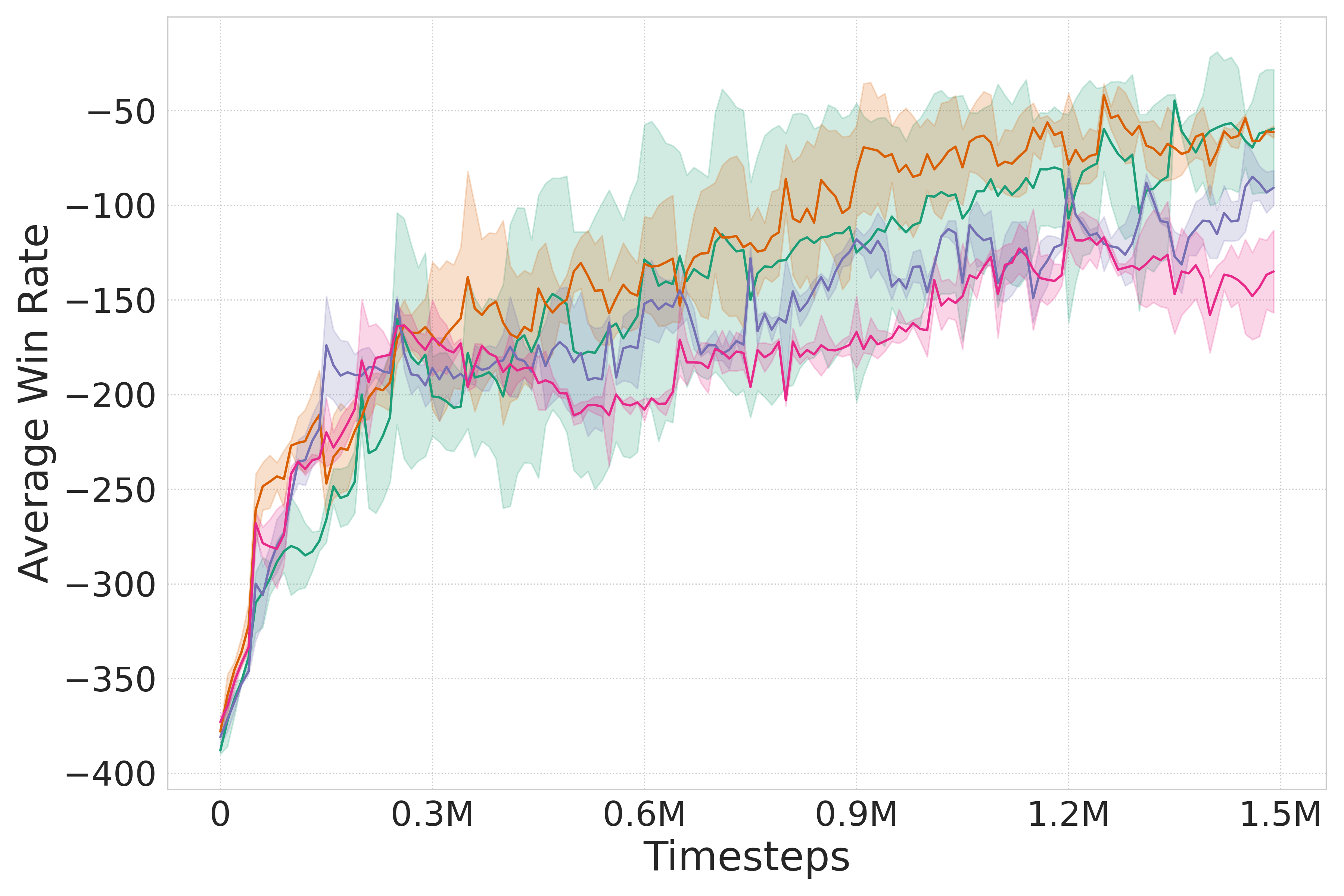}}
    \hfill
    \subfigure[Traffic Junction]{\label{fig:app_res_b}\includegraphics[width=0.45\columnwidth]{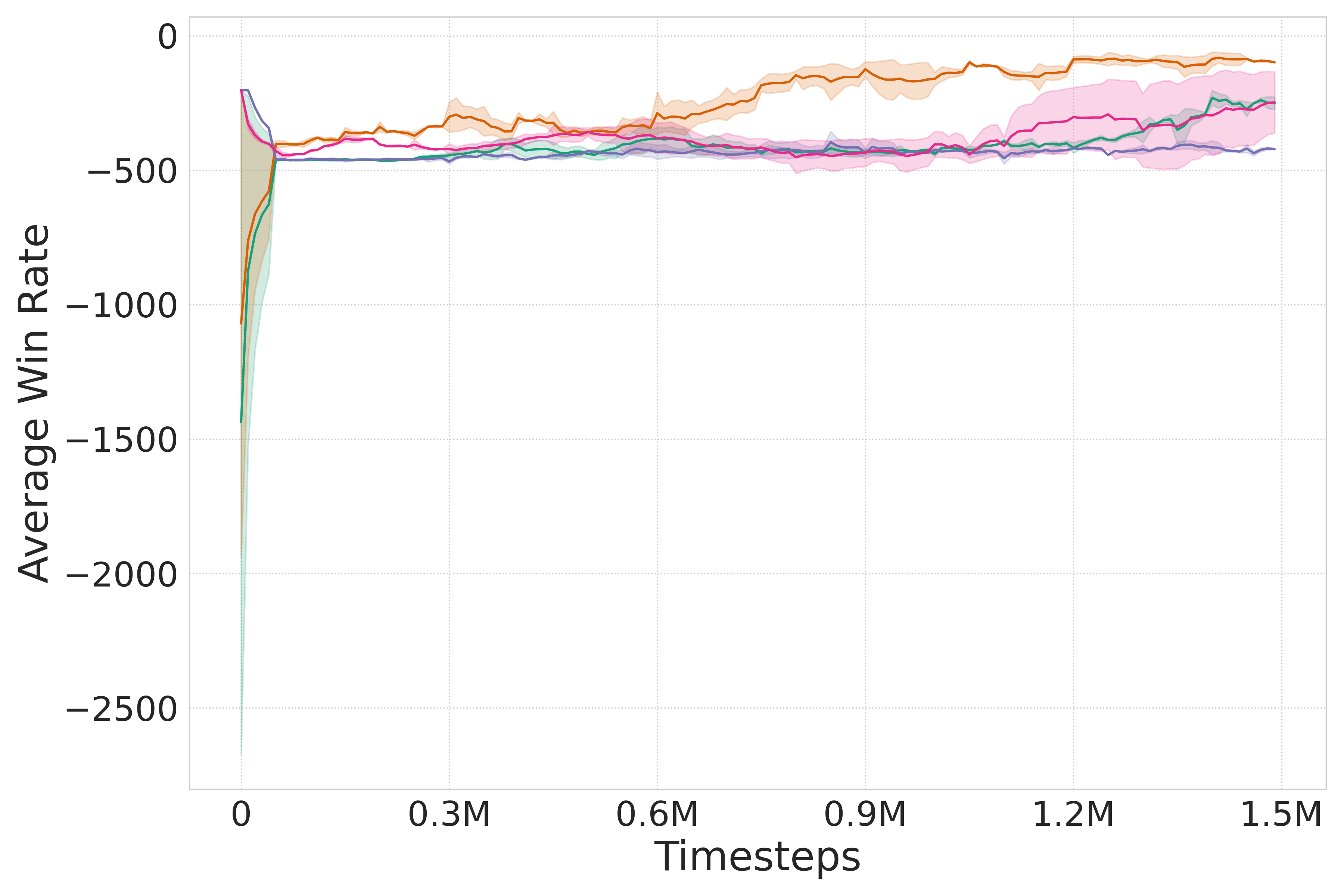}}
    \caption{Additional results in (a) Lumberjacks and (b) Traffic Junction. The plots (a) and (b) depict the average team reward over time.}
    \label{fig:app_res}
\end{figure}
\begin{figure}[!t]
    \centering
    \includegraphics[width=\columnwidth]{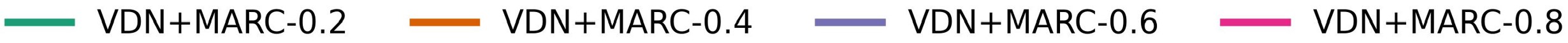}
    \\
    \vspace{0.00mm} 
    \subfigure[3s\_vs\_5z]{\label{fig:comp_vdn_env_a}\includegraphics[width=0.45\columnwidth]{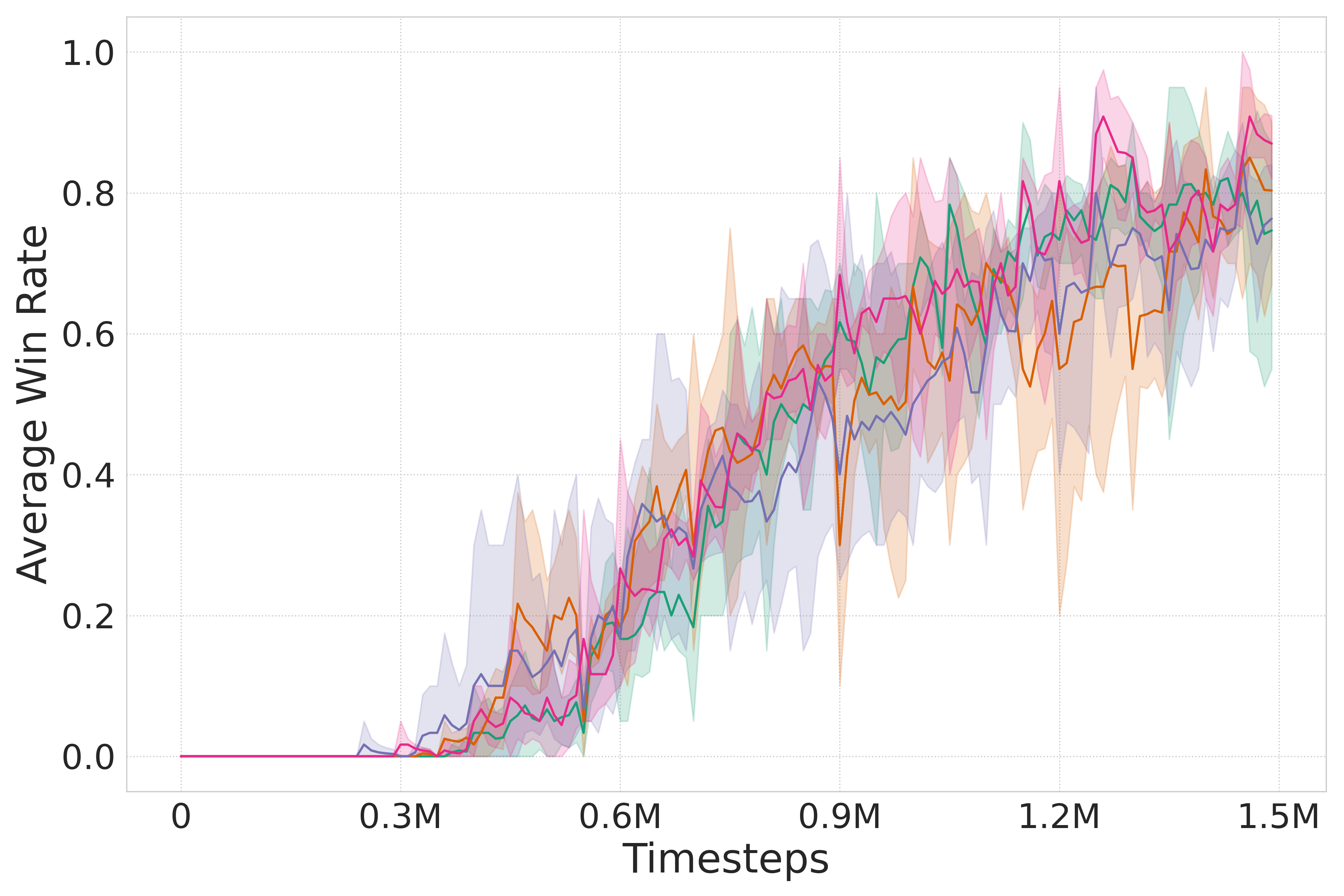}}
    \hfill
    \subfigure[2c\_vs\_64zg]{\label{fig:comp_vdn_env_b}\includegraphics[width=0.45\columnwidth]{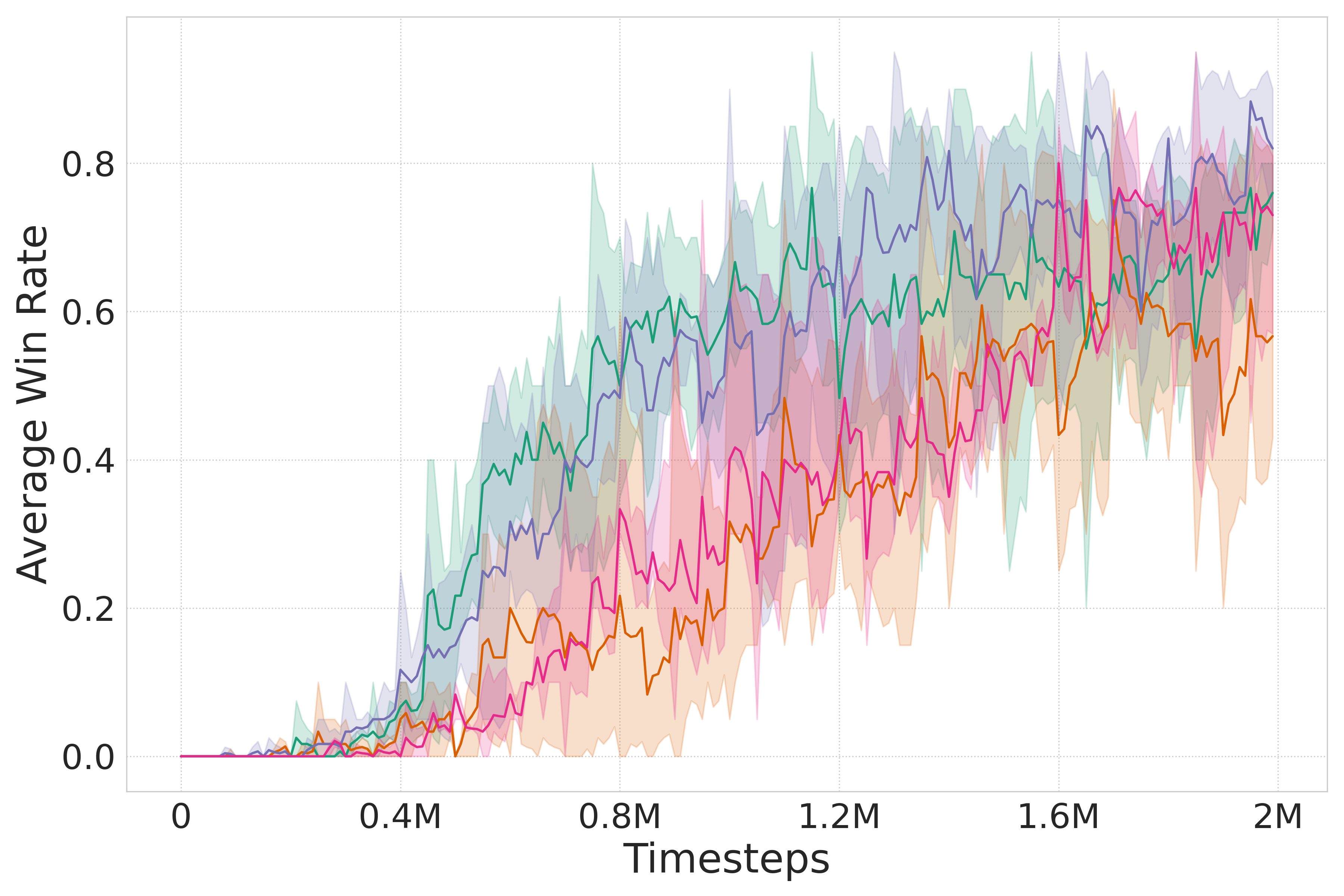}}
    \caption{Results for VDN+MARC when we use the DCT to compress the messages that the agents generate. To evaluate the impact of the lossy compression of the DCT, we compress the messages to 20\%, 40\%, 60\%, and 80\% of their original size, as illustrated in the figure.}
    \label{fig:vdn_comp_res}
\end{figure}

\subsection{Additional Results: Message Compression and Representation}\label{sec:exps_comp}
\subsubsection{Compressing Messages}
In the previous section, the results presented assume optimal conditions where communication among entities can be done in a lossless manner, i.e., where messages can be exchanged without loss of information. In this subsection, we present the results of communicating under a lossy communication channel, using the same proposed architecture. More specifically, we use the DCT to compress the messages into a smaller representation. Importantly, this does not affect the size of the communication endpoints, since the messages are reconstructed once they reach the destination, and thus there is no need to modify the size of the networks for different levels of compression. 
\begin{figure}[!t]
    \centering
    \includegraphics[width=\columnwidth]{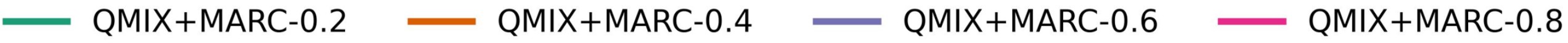}
    \\
    \vspace{0.00mm} 
    \subfigure[3s\_vs\_5z]{\label{fig:comp_env_a}\includegraphics[width=0.45\columnwidth]{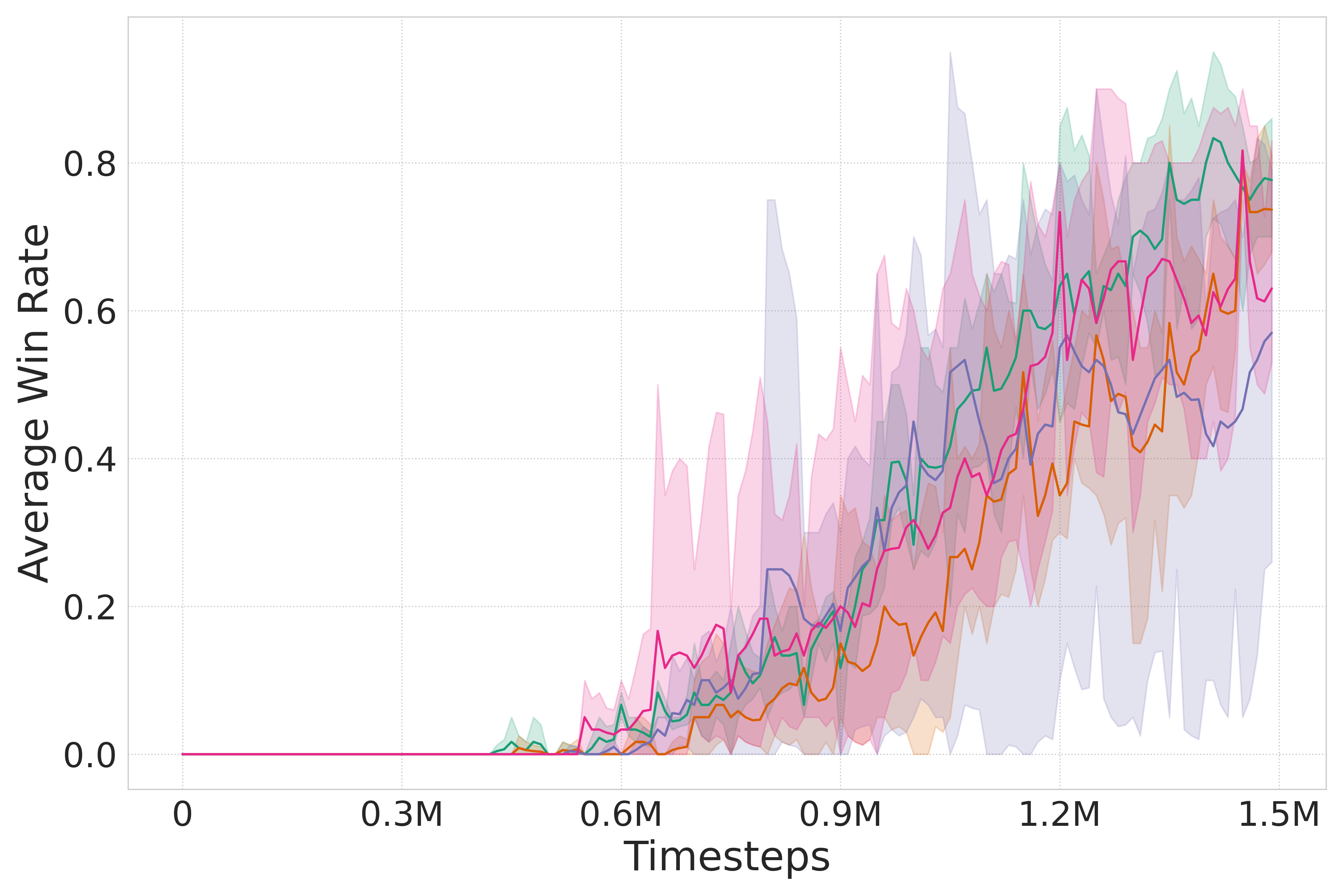}}
    \hfil
    \subfigure[2c\_vs\_64zg]{\label{fig:comp_env_b}\includegraphics[width=0.45\columnwidth]{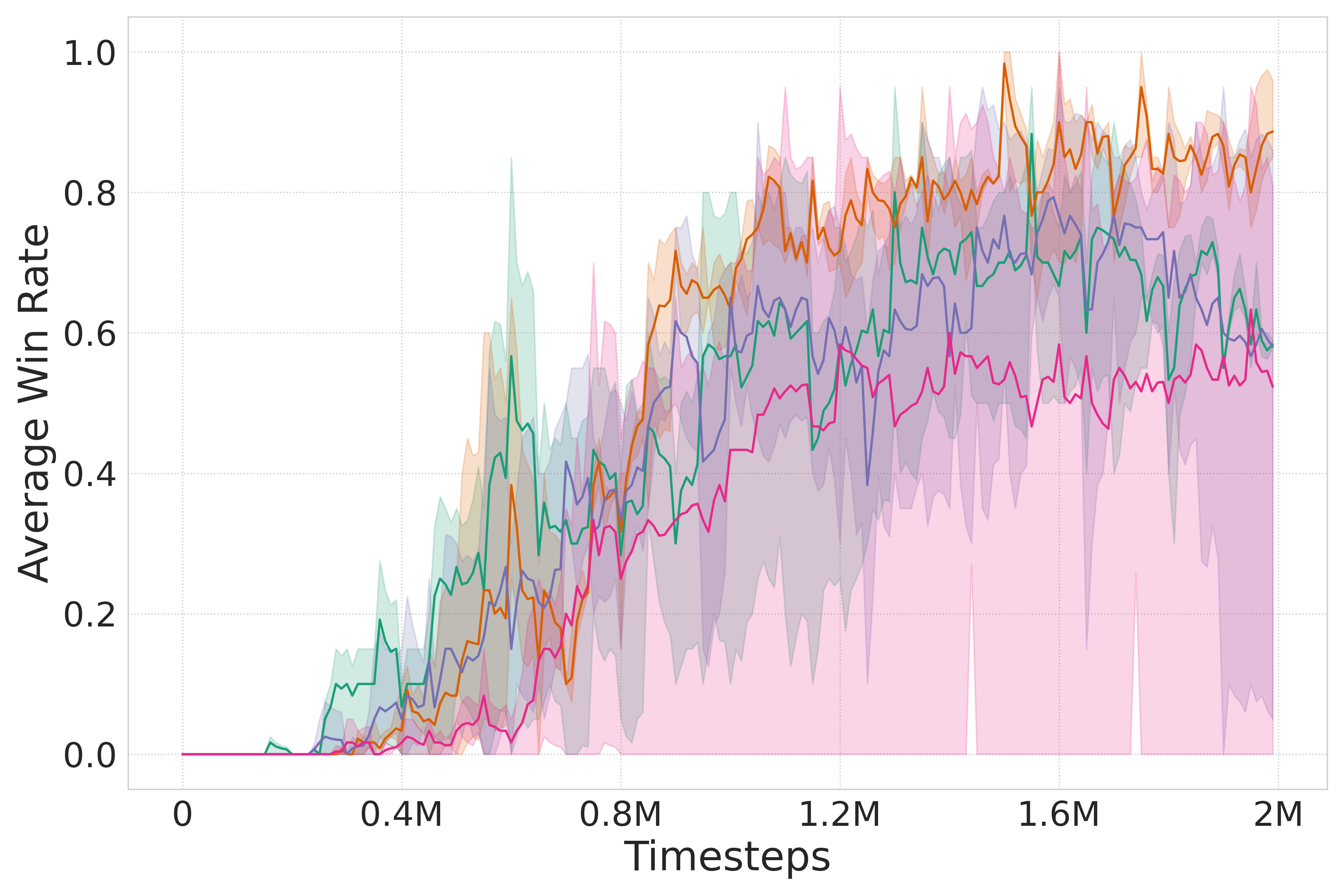}}
    \caption{Results for QMIX+MARC when we use the DCT to compress the messages that the agents generate. To evaluate the impact of the lossy compression of the DCT, we compress the messages to 20\%, 40\%, 60\%, and 80\% of their original size, as illustrated in the figure.}
    \label{fig:comp_res}
\end{figure}

Fig. \ref{fig:vdn_comp_res} illustrates the performances of VDN+MARC under different levels of DCT lossy message compression. In the figure, we can see that VDN+MARC can still learn even when using compressed messages, despite there is also a decrease of performance in both of the attempted scenarios. However, in Fig. \ref{fig:comp_vdn_env_b}, there is still a big increase of performance when compared to VDN without communication in this environment (as seen before in Fig. \ref{fig:main_res}). Importantly, this means that, even when using compressed and hence lightweight messages, the performance can still be improved with communication when it is necessary.

In addition to the effects of compression on VDN+MARC, we extend our experiments to evaluate the effect of compression also on QMIX+MARC. Fig. \ref{fig:comp_env_a} depicts the performances of QMIX+MARC under different levels of DCT lossy message compression. As expected, we can see that the performances will decrease as the level of message compression increases. However, it is important to note that the agents can still learn the tasks despite the high levels of compression. In some cases where communication channels might be constrained to a certain bandwidth, it is important to ensure that communication can still be leveraged to improve cooperation in MARL. 

To investigate how the compression method affects the messages learned by the agents, we have looked into the messages devised by the agents for a 40\% level of compression with QMIX+MARC in the 2c\_vs\_64zg environment. In Fig. \ref{fig:msgs_comp} we can see that the messages will occupy smaller ranges (mainly evident when closer to the end of the episode), than when compared to when there is no compression (which will be discussed in the next subsection). This suggests that the agents try to find more specific messages with less broad values when these must be compressed afterwards.

\begin{figure}[!t]
    \centering
    \includegraphics[width=\columnwidth]{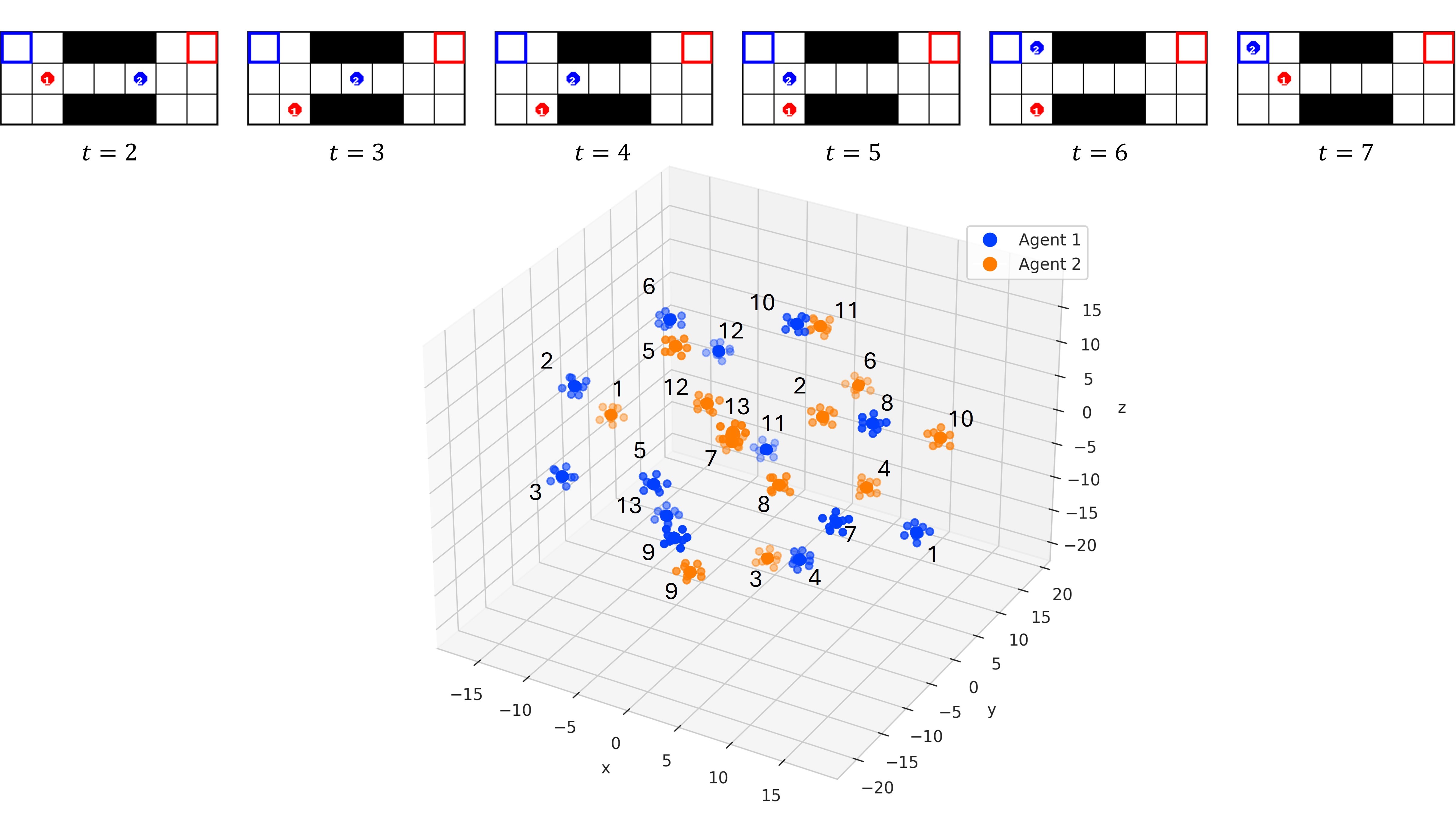}
    \caption{TSNE plot of the messages learned by each agent in a Switch task, after training using VDN+MARC. Each number in the plot represents the respective time step in the environment until termination, next to the message generated by each agent at that timestep. Note that, while the depicted episode lasted for 13 timesteps, above we show the states for the timesteps from 2 to 7 only.}
    \label{fig:switch_tsne}
\end{figure}

\subsubsection{Messages Learned}\label{add_res:msgs_learned}
One of the motivations for the MARC architecture is to enable the agents to produce more meaningful messages of their perceptions of the environment. To further support our motivations, we have trained VDN+MARC in the Switch environment from \cite{magym} where two agents need to reach their destination by crossing a corridor where only one of them fits at a time. Fig. \ref{fig:switch_tsne} shows the TSNE plot of the messages learned by the agents on a successful episode after training the team in this environment (note that the episode lasts for 13 timesteps but only the steps from 2 to 7 are illustrated). In the plot we can clearly see that, at each timestep, each agent learns a distinct message that contains a meaning that is attached to that particular observation. At the time of negotiating their passage ($t=[3, \dots, 5]$), in the screenshots above the plot we can see that one agent waits for the other to pass and only after will start to cross, while taking into account the messages received.
\begin{figure}[!t]
    \centering
    \subfigure[QMIX+MARC with 40\% DCT compression]{\label{fig:msgs_comp_a}\includegraphics[width=0.45\columnwidth]{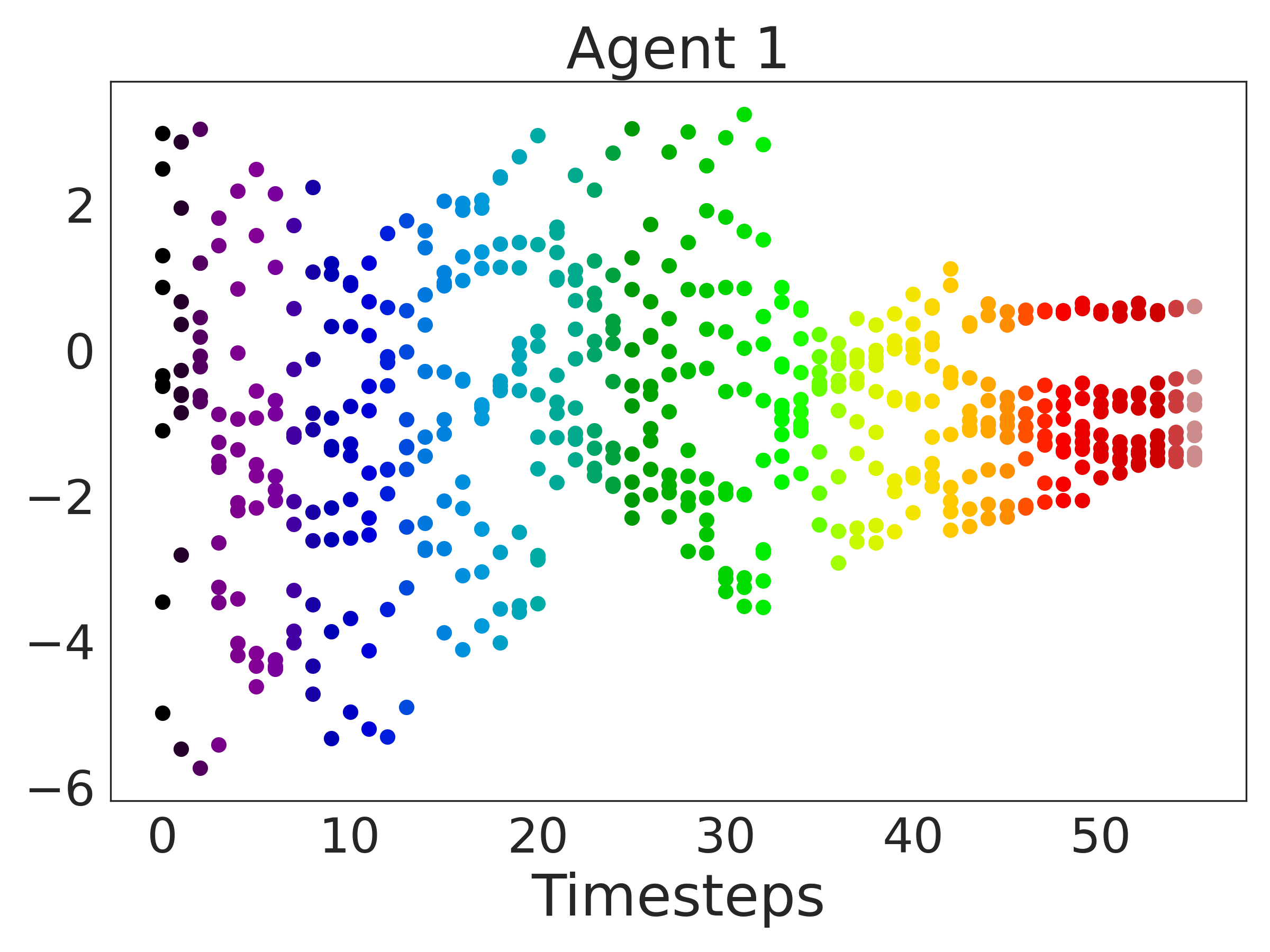}}
    \hfill
    \subfigure[QMIX+MARC with 40\% DCT compression]{\label{fig:msgs_comp_b}\includegraphics[width=0.45\columnwidth]{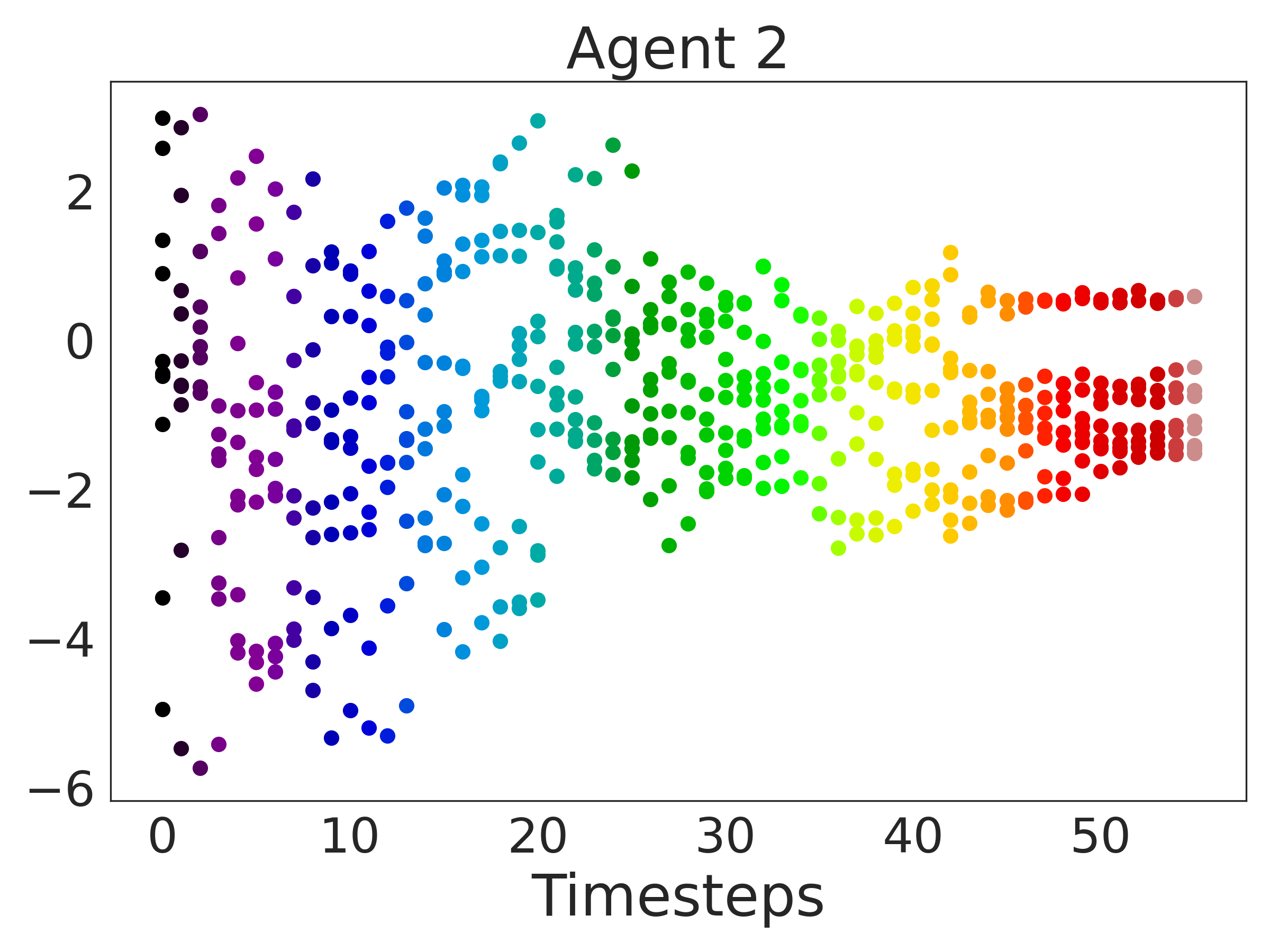}}
    \caption{Messages generated by the two agents over the course of a successful 2c\_vs\_64zg episode, after being trained with QMIX+MARC under a message compression level of 40\% with the DCT (messages plotted after being decompressed at the destination using the IDCT).}
    \label{fig:msgs_comp}
\end{figure}
    
\subsection{Ablations: Message Regularizer}\label{sec:exps_msg_reg}
We have previously described how MARC uses a message regularizer to improve the quality of the messages learned during training. In this subsection, we analyse the impact of the message regularizer on the messages produced in the learning problem.  
    
To analyse the messages learned, we have saved the trained networks of QMIX+MARC for the 2c\_vs\_64zg task, both using and not using the regularizer (purple box in Fig.\ref{fig:net_arch}). Fig. \ref{fig:msgs_abl} shows the messages produced by the two agents involved in 2c\_vs\_64zg during a successful episode, i.e., an episode where the agents win the game. We can see in Fig. \ref{fig:msgs_abl_c} and \ref{fig:msgs_abl_d} that the messages produced when the agents use the message regularizer are compact within a smaller interval (around $\left[-4,4\right]$) when compared to the messages produced when we do not use the regularizer (Fig. \ref{fig:msgs_abl_a} and \ref{fig:msgs_abl_b}). In the latter case, the messages produced assume values that lie inside a much larger range (around $\left[-10,10\right]$), meaning that the possible kind of messages learned by the agents is more ambiguous, and making the task more difficult for them since the messages will not be as precise and meaningful as in the case of the message regularizer. The values of the differential entropy $H$ for the messages produced that are shown in the caption of the figures also support our observations that messages that were produced with the help of the regularizer will contain less uncertainty and ambiguity. These observations support the results presented in subsubsection \ref{add_res:msgs_learned} showcasing messages other messages produced through MARC.
    \begin{figure}[!t]
        \centering
        \subfigure[\textbf{No} regularizer.\newline $H=2.065$.]{\label{fig:msgs_abl_a}\includegraphics[width=0.23\textwidth]{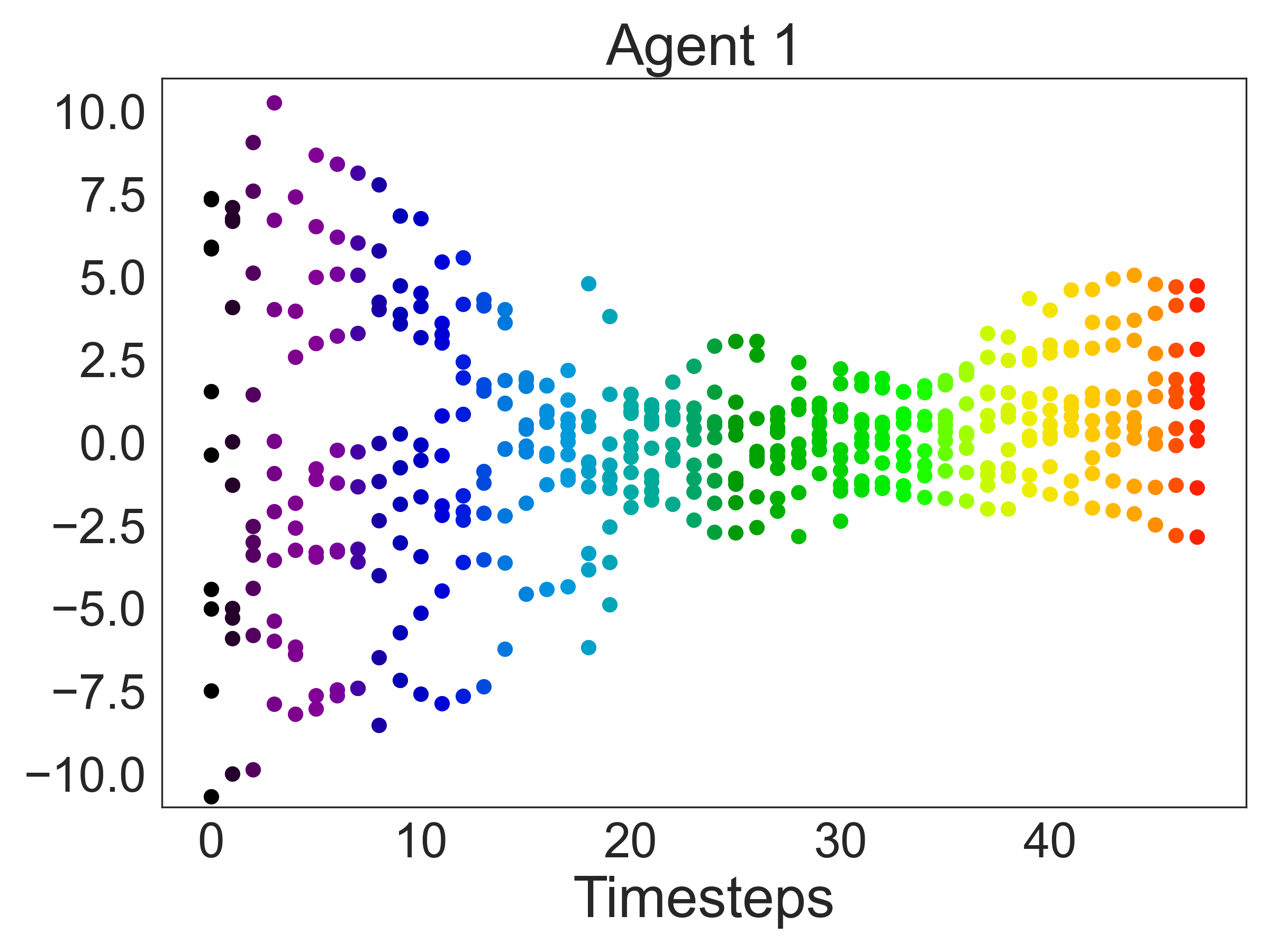}}
        \hfill
        \subfigure[\textbf{No} regularizer.\newline $H=2.172$.]{\label{fig:msgs_abl_b}\includegraphics[width=0.23\textwidth]{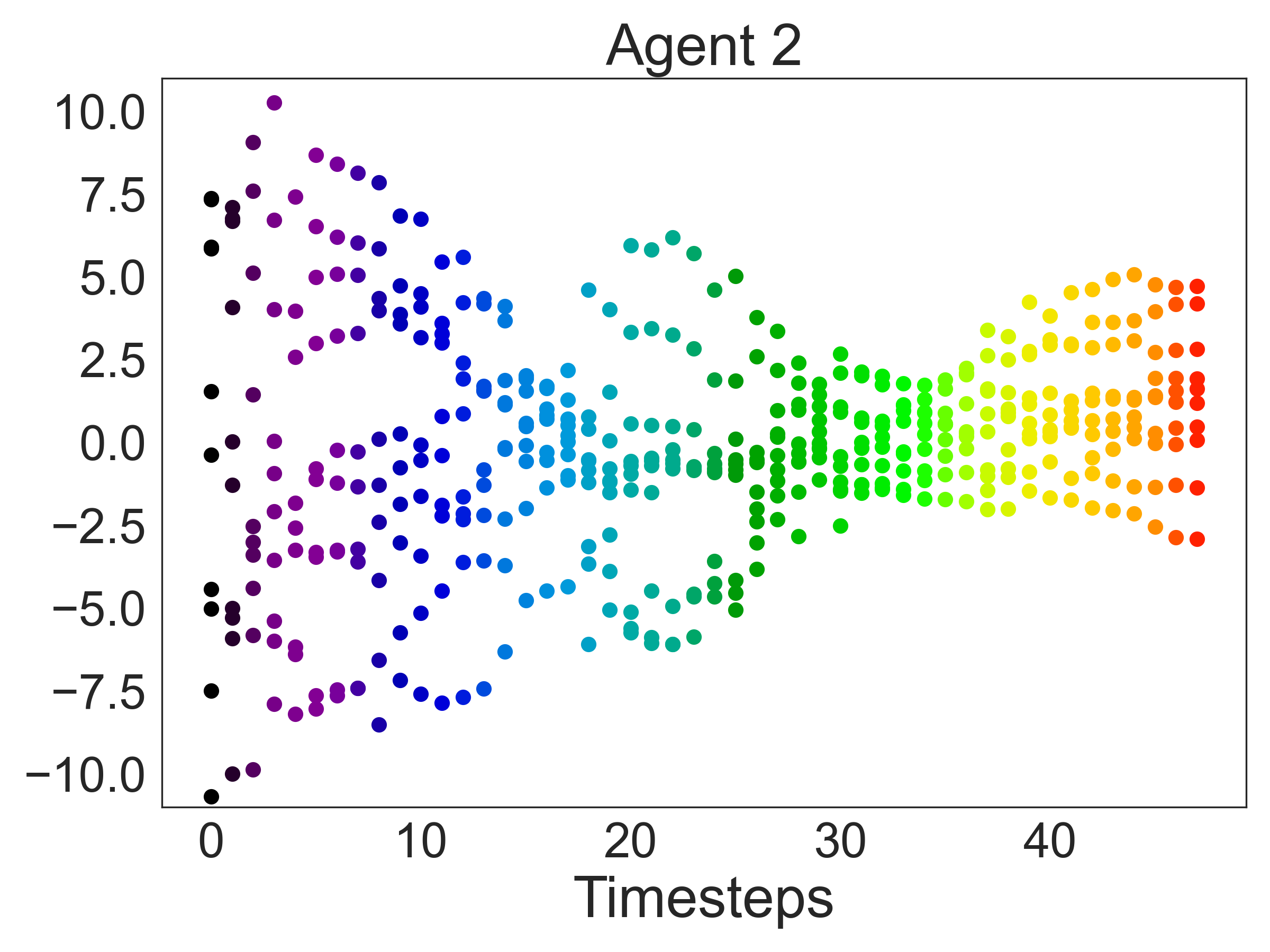}}
        \hfill
        \subfigure[\textbf{With} regularizer.\newline $H=1.279$.]{\label{fig:msgs_abl_c}\includegraphics[width=0.23\textwidth]{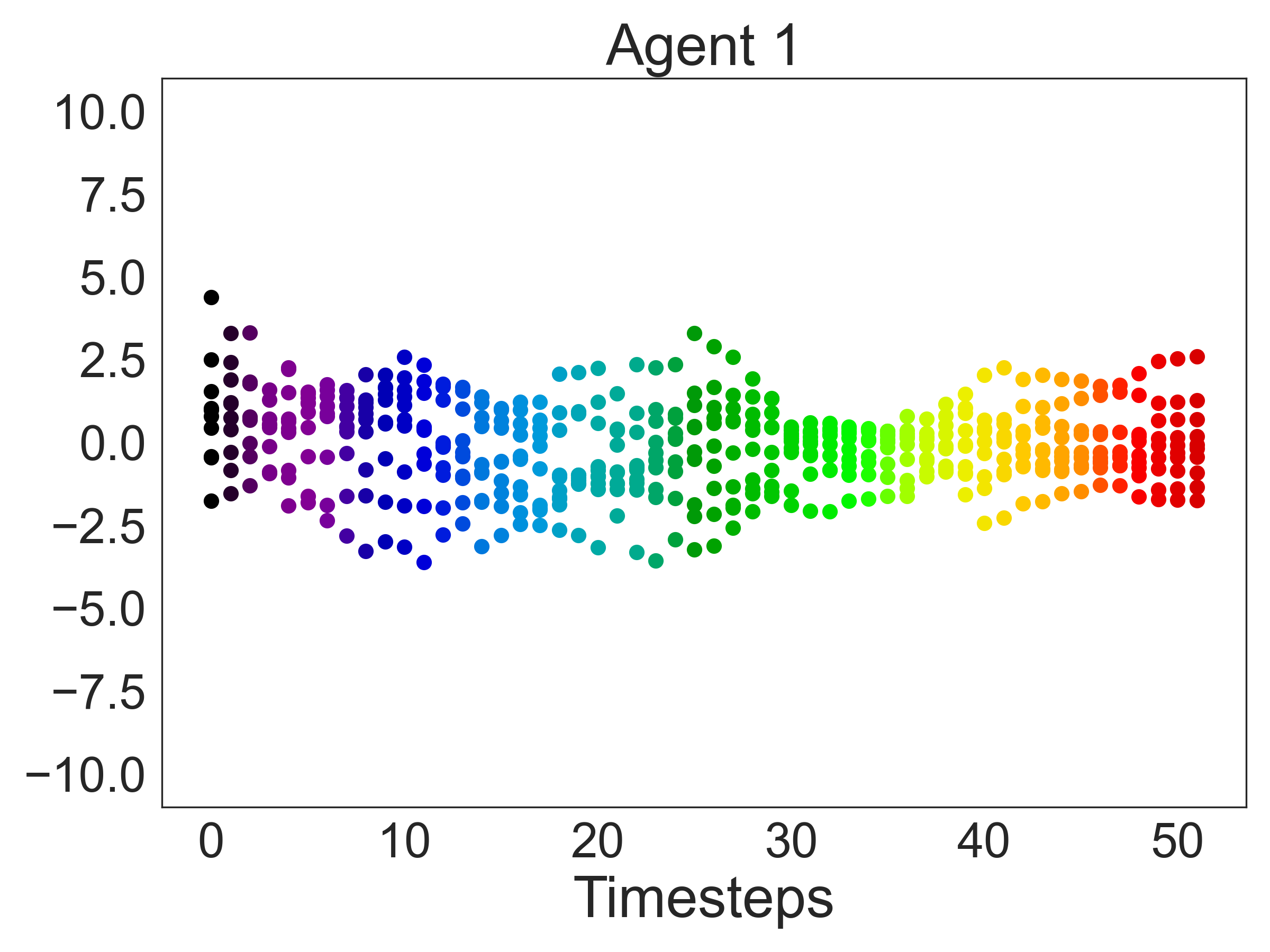}}
        \hfill
        \subfigure[\textbf{With} regularizer.\newline $H=1.293$.]{\label{fig:msgs_abl_d}\includegraphics[width=0.23\textwidth]{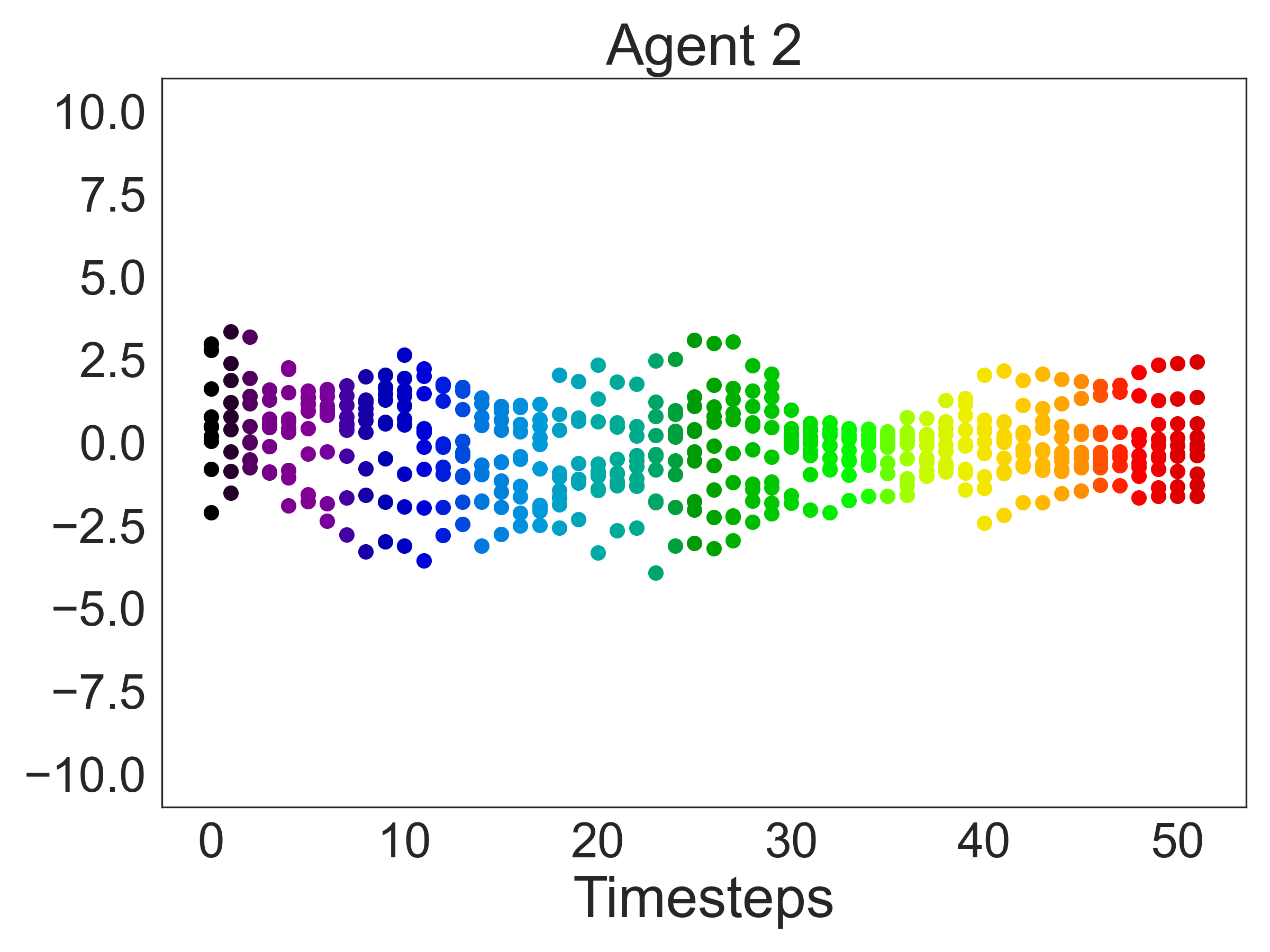}}
        \caption{Messages learned by the two agents in 2c\_vs\_64zg throughout one successful episode with QMIX+MARC when the message regularizer is removed (a-b), and when it is used (c-d). The values of $H$ denote the differential entropy of the messages.}
        \label{fig:msgs_abl}
    \end{figure}

\section{Conclusion and Future Work}\label{sec:conc}
Communication frameworks in Multi-Agent Reinforcement Learning (MARL) represent a transformative approach to developing cooperative strategies within decentralized systems. By enabling agents to synthesize and share local observations, communication facilitates a higher level of behavioral synchronization and predictive coordination. In this work, we introduced \textbf{Multi-Agent Regularized Communication (MARC)}, a modular framework compatible with diverse value-function factorization methods. Our experimental results demonstrate that MARC significantly enhances performance in complex tasks by fostering the emergence of highly representative communication protocols through its novel information-inspired regularization mechanism.

Furthermore, we investigated a critical gap between MARL theory and the physical constraints of real-world deployment: limited communication bandwidth. By integrating the Discrete Cosine Transform (DCT) for message compression, we demonstrated that robust distributed intelligence can be maintained even under significant information loss. This finding validates that communication-based cooperation is viable for resource-constrained hardware and high-latency environments where spectral efficiency is paramount.

While MARC provides a robust foundation, several avenues for future research remain. We aim to investigate the impact of communication across varying network capacities and heterogeneous agent architectures. Additionally, while the current work operates within the CTDE paradigm, future iterations will explore fully decentralized training schemes to further relax the reliance on centralized mixers. Finally, we intend to study how adaptive message compression affects learning across different network scales and extend these findings to practical robotic testbeds to showcase the effects of communicating in dynamic, real-world environments.

\bibliography{biblio}
\bibliographystyle{IEEEtran}

\vfill

\end{document}